\documentclass[runningheads]{llncs}
\usepackage[T1]{fontenc}
\usepackage{multirow,multicol,bm,tikz,pgfplots,booktabs,url,hyperref,bm,comment,amsmath,svg,subcaption,float,mathtools,amssymb}
\usepackage[most]{tcolorbox}
\usepackage{xcolor}
\tcbuselibrary{breakable}
\usetikzlibrary{arrows.meta,positioning}
\tikzset{>=Stealth,node distance=2cm,var/.style={font=\large\itshape},every path/.style={thick}}
\usepackage{listings}
\makeatletter
\newcommand*{\da@rightarrow}{\mathchar"0\hexnumber@\symAMSa 4B }
\newcommand*{\da@leftarrow}{\mathchar"0\hexnumber@\symAMSa 4C }
\newcommand*{\xdashrightarrow}[2][]{%
  \mathrel{%
    \mathpalette{\da@xarrow{#1}{#2}{}\da@rightarrow{\,}{}}{}%
  }%
}
\newcommand{\xdashleftarrow}[2][]{%
  \mathrel{%
    \mathpalette{\da@xarrow{#1}{#2}\da@leftarrow{}{}{\,}}{}%
  }%
}
\newcommand{\xdashleftrightarrow}[2][]{%
  \mathrel{%
    \mathpalette{\da@xarrow{#1}{#2}\da@leftarrow\da@rightarrow{\,}{\,}}{}%
  }%
}
\newcommand*{\da@xarrow}[7]{%
  \sbox0{$\ifx#7\scriptstyle\scriptscriptstyle\else\scriptstyle\fi#5#1#6\m@th$}%
  \sbox2{$\ifx#7\scriptstyle\scriptscriptstyle\else\scriptstyle\fi#5#2#6\m@th$}%
  \sbox4{$#7\dabar@\m@th$}%
  \dimen@=\wd0 %
  \ifdim\wd2 >\dimen@
    \dimen@=\wd2 %
  \fi
  \count@=2 %
  \def\da@bars{\dabar@\dabar@}%
  \@whiledim\count@\wd4<\dimen@\do{%
    \advance\count@\@ne
    \expandafter\def\expandafter\da@bars\expandafter{%
      \da@bars
      \dabar@ 
    }%
  }%
  \mathrel{#3}%
  \mathrel{%
    \mathop{\da@bars}\limits
    \ifx\\#1\\%
    \else
      _{\copy0}%
    \fi
    \ifx\\#2\\%
    \else
      ^{\copy2}%
    \fi
  }%
  \mathrel{#4}%
}
\makeatother

\pgfplotsset{compat=1.18}
\begin{document}
\newcommand{\dsep}{\perp\mkern-9.5mu\perp}

\title{Automatic Causal Fairness Analysis with LLM-Generated Reporting}
\author{Alessia Berarducci, Eric Rossetto, Alessandro Antonucci, Marco Zaffalon}
\institute{Istituto Dalle Molle di Studi sull'Intelligenza Artificiale (IDSIA), USI-SUPSI, Lugano, Switzerland \\
\email{\{alessia.berarducci, eric.rossetto, alessandro.antonucci, marco.zaffalon\}@supsi.ch}
}

\maketitle  
\begin{abstract}
AutoML, intended as the process of automating the application of machine learning to real-world problems, is a key step for AI popularisation. Most AutoML frameworks are not accounting for the potential lack of fairness in the training data and in the corresponding predictions. We introduce \textsc{FairMind}, a software prototype aiming to automatise fairness analysis at the dataset level. We achieve that by resorting to the assumptions of the \emph{standard fairness model}, recently proposed by Plečko and Bareinboim. This allows for a sound fairness evaluation in terms of causal effects, based on \emph{counterfactual} queries involving the target, possibly confounders and mediators, and the different values of an input feature we regard as \emph{protected}. After the necessary data preprocessing, the tool implements a closed-form computation of the effects. LLMs are consequently exploited to generate accurate reports on the fairness levels detected in the training dataset. We achieve that in a zero-shot setup and show by examples the expected advantages with respect to a direct analysis performed by the LLM. To favour applications, extensions to ordinal protected variable and continuous targets and novel decomposition results are also discussed.
\keywords{Causality \and Fairness Evaluation \and Mediation Analysis \and Standard Fairness Model \and Natural Language \and AutoML.}
\end{abstract}
\section{Introduction}
In recent years, significant efforts have been devoted to \emph{automated} machine learning (AutoML), where ML algorithms are trained from data with zero or minimal human preprocessing and directly used to perform predictions \cite{barbudo2023}. The evaluation and, when possible, the mitigation of \emph{fairness} issues also emerged as an important aspect to be considered in the design of these predictive tools \cite{weertsCanFairnessBe2024}. However, to the best of our knowledge, little attention has been paid to fairness within the AutoML community. This situation appears unfortunate, since frameworks such as the causal one proposed by Plečko and Bareinboim \cite{pleckoCausalFairnessAnalysis2024} already provide a powerful and systematic approach to fairness quantification.

A major goal of this paper is therefore to promote such a counterfactual fairness framework in AutoML settings by reviewing the semantics of the key inferences, together with the corresponding identification formulae. These quantities can be obtained from observational data and their computation naturally embedded within the tool we present. Notably, we also extend the framework to a number of more general cases commonly encountered in applications, including ordinal protected variables and continuous target variables. The resulting quantitative analysis is paired with a \emph{large language model} (LLM), that provides automated textual reporting based on a zero-shot paradigm guided by a \emph{chain-of-thoughts} scheme. This is in line with the most recent trends in AutoML, where LLM agents are increasing flexibility and usability \cite{surveyAutoMLLLMs}.

Within the Pearlian causal inference framework \cite{pearlCausalInferenceStatistics2016}, adapted to algorithmic fairness in \cite{pleckoCausalFairnessAnalysis2024}, multiple works have explored fairness from a theoretical perspective. However, these approaches typically rely on manual modelling decisions and expert interpretation, which somehow contrasts with the AutoML philosophy \cite{weertsCanFairnessBe2024}. Instead of proposing new fairness definitions, we integrate existing causal fairness methods into an automated pipeline. We develop a tool that automatically selects, estimates and explains causal fairness effects with minimal user interventions. Result interpretation is supported by an LLM, enabling non-experts to conduct and understand causal fairness analyses within an AutoML workflow. The work in \cite{mahajanCausalFairnessInActionOpenSource2025} also describes an implementation of counterfactual fairness metrics, but it only considers aggregated effects and it is not publicly available. Our framework focuses instead on automating effect selection, interpretation and reporting through LLM-assisted pipelines.
A web interface to use the framework is publicly available.\footnote{\href{https://fairmindcausal.streamlit.app}{fairmindcausal.streamlit.app}.} The code and some demonstrative examples are also available.\footnote{\href{https://github.com/Erhtric/fairmind-causal-fairness-analysis}{github.com/Erhtric/fairmind-causal-fairness-analysis}.}

The paper is organised as follows. Sect.~\ref{sec:background} introduces the basic concepts and notation. The standard fairness model is presented in Sect.~\ref{sec:sfm}, together with the necessary identification formulae. Sect.~\ref{sec:theory_new} presents our extensions of the framework. The automatic reporting based on LLMs is discussed in Sect.~\ref{sec:app-ui-llm} together with the \textsc{FairMind} application we release. Conclusions and outlooks for future work are in Sect.~\ref{sec:conc}. As a supplementary material, we gather the proofs in App.~\ref{app:proofs}, and the prompt we used to generate the fairness reports in App.~\ref{app:prompt}.

\section{Basics}\label{sec:background}
We use uppercase letters for variables, lowercase is used instead for states, and calligraphic for sets of the states. Accordingly, $v \in \mathcal{V}$ is a generic state of $V$.

\paragraph{\bf Structural Causal Models (SCMs).} Following \cite{pearlCausalityModelsReasoning2009}, we define an SCM $\mathcal{M}$ (SCMs) as a tuple $\langle \bm{V}, \bm{U}, \mathcal{F}, P(\bm{U}) \rangle$ where: (i) $\bm{V}$ is a set of \emph{endogenous} (observed) variables; (ii)  $\bm{U}$ is a set of \emph{exogenous} (latent) variables distributed according to $P(\bm{U})$; (iii) $\mathcal{F}$ is a collection of functions, denoted as \emph{structural equations}. Each $V_i \in \bm{V}$ is determined by a function $f_{V_i} \in \mathcal{F}$, defined as a mapping from a set of exogenous variables $\bm{U}_i \subseteq \bm{U}$ and a set of endogenous variables $\mathrm{Pa}_{V_i} \subseteq \bm{V} \setminus \{V_i\}$ to the domain of $V_i$. The structural equations induce a \emph{causal graph} $\mathcal{G}$, where an edge $(X,Y)$ exists if and only if $X \in \mathrm{Pa}_Y$. Throughout the manuscript, we focus our attention on \emph{recursive} SCMs, whose causal graph is acyclic. Bi-directed edges ($V_i \xdashleftrightarrow[]{} V_j$) indicate a common (latent) \emph{confounder} $U \in \bm{U}$ appearing both in $f_{V_i}$ and $f_{V_j}$. In the absence of bi-directed edges, or in other words if there is a one-to-one correspondence between $\bm{U}$ and $\bm{V}$, the SCM is \emph{Markovian}; otherwise, it is \emph{Semi-Markovian}. A \emph{linear SCM} specifies that each structural equation is a linear function.

\paragraph{\bf Interventions, Counterfactuals and Nesting.} In an SCM, $P(\bm{U})$ and $\mathcal{F}$ induce a joint \emph{observational} distribution $P(\bm{V})$ over $\bm{V}$. \emph{Interventions} are the semantic operations that allow us to modify this data generating regime. For an arbitrary set $\bm{X} \subseteq \bm{V}$, an \emph{intervention} yields a new model $\mathcal{M}_{\bm{x}}$ (i.e., a \emph{sub-model}) by replacing the structural equations associated with $\bm{X}$ by constant assignments $\bm{X}\gets\bm{x}$. This operation leaves $P(\bm{U})$ and $\mathcal{F} \setminus \{f_{x}\}_{x\in\bm{X}}$ unchanged, and it is commonly denoted as $do(\bm{X}=\bm{x})$. For any $Y\in\bm{V}$, the impact of an intervention $\bm{X} \gets \bm{x}$ on $Y$ is the \emph{potential response} $Y_{\bm{x}}(\bm{u})$. It denotes the value of $Y$ in sub-model $\mathcal{M}_{\bm{x}}$ given $\bm{U}=\bm{u}$. A \emph{counterfactual} variable (or \emph{potential outcome}) $Y_{\bm{x}}$ is the variable induced by $Y_{\bm{x}}(\bm{u})$ when $\bm{u} \sim P(\bm{U})$. Computing counterfactuals from evidence $\bm{E} = \bm{e}$ follows a three-step procedure: (i) compute $P(\bm{U}\mid \bm{E}=\bm{e})$; (ii) intervene to construct $\mathcal{M}_{\bm{x}}$; and (iii) compute $Y_{\bm{x}}$ \cite{pearlCausalityModelsReasoning2009}. Furthermore, counterfactuals can be \emph{nested} \cite{pearlDirectIndirectEffects2001}: $Y_{x, Z_{x'}}\coloneq f_{Y}(x, Z_{x'}(\bm{u}), \bm{u})$, represents the value $Y$ would take had $X$ been set to $x$, while $Z$ is set to the value it would have attained had $X$ been $x'$.

\paragraph{\bf Identifiability.} We define a causal query $Q$ as the probabilistic evaluation of arbitrary combinations of counterfactual variables given some evidence, i.e., $P(\bm{Y}_{\bm{x}} = \bm{y} \mid \bm{E}=\bm{e})$. For any SCM $\mathcal{M}$, if $P(\bm{U})$ and $\mathcal{F}$ are fully specified, evaluating $Q$ follows the three-step procedure detailed above. However, in practice, $\mathcal{F}$ and $P(\bm{U})$ remain unknown. Instead, one must establish \emph{identifiability}: deciding whether a target $Q$ can be uniquely computed from the observational distribution $P(\bm{V})$ and the structural assumptions encoded in the causal graph $\mathcal{G}$. We typically resolve \emph{(interventional) identifiability}, e.g., $Q = P(Y_{\bm{x}})$, by employing the \emph{do-calculus} \cite{pearlCausalityModelsReasoning2009}. In contrast, evaluating algorithmic fairness queries requires \emph{(counterfactual) identifiability}, which involves determining identifiability for nested counterfactuals or joint distributions under conflicting interventions, e.g., $Q = P(Y_{x'}\mid x,y)$. To address this, the \emph{counterfactual calculus} (ctf-calculus) generalizes \emph{do-calculus} to identify independencies across multiple worlds \cite{correaCounterfactualGraphicalModels2025}. 

\section{Standard Fairness Model}\label{sec:sfm}
Let $X$ and $Y$ denote two endogenous variables of an SCM $\mathcal{M}$. We define $Y$ as the \emph{target} variable and $X$ as a \emph{protected} feature that topologically precedes $Y$ in the causal graph of $\mathcal{M}$. While we typically assume $X$ is a parent of $Y$, our analysis holds even if $X$ is merely an ancestor. We initially assume $X$ is a binary variable with states $x_0$ and $x_1$. The causal effect of $X$ on $Y$ may operate through one or more \emph{mediating variables} $\bm W$, representing intermediate factors along the causal paths from $X$ to $Y$. Furthermore, a set of observed \emph{confounders} $\bm Z \subset \bm V$ may generate spurious, non-causal associations between $X$ and $Y$. Graphically, this confounding is denoted by a bi-directed dashed edge $X \xdashleftrightarrow[]{}  Z$, indicating the presence of shared, unobserved exogenous variables $U$ influencing both $X$ and $Z$. \emph{Adjusting} for $Z$ effectively blocks all non-causal paths, thereby neutralising the confounding bias and satisfying the necessary criteria to identify the causal effect of $X$ on $Y$. Fig.~\ref{fig:causalgraph_simple} illustrates the topology induced by $\mathcal{M}$. This clustered graphical structure corresponds to the \emph{Standard Fairness Model} (SFM)  \cite{pleckoCausalFairnessAnalysis2024}. The SFM represents a template-like graph---an equivalence class of causal diagrams---that alleviates causal modelling requirements whilst preserving the identifiability of causal fairness quantities. It is noteworthy to point out that: (i) it remains agnostic to the internal structural relationships within the possibly multidimensional sets $\bm Z$ and $\bm W$; and (ii) it encodes the assumption of no hidden confounding between the respective variable groups, an assumption whose relaxations have been discussed \cite{pleckoCausalFairnessAnalysis2024}. Throughout this work, we assume the underlying SCM $\mathcal{M}$ compatible with the SFM assumptions.

\begin{figure}[htp!]
\centering
\begin{subfigure}{0.45\textwidth}
\centering
\begin{tikzpicture}[scale=0.9]
\node[var] (X) at (-1.5,0) {$X$};
\node[var] (Y) at ( 3,0) {$Y$};
\node[var] (Z) at ( 0.7,1) {$\bm Z$};
\node[var] (W) at ( 0.7,-1) {$\bm W$};
\draw[->] (X) -- (Y);
\draw[->] (Z) -- (Y);
\draw[->] (X) -- (W);
\draw[->] (W) -- (Y);
\draw[->] (Z) -- (W);
\draw[dashed,<->,bend right=25] (Z) to (X);
\end{tikzpicture}
\caption{}\label{fig:causalgraph_simple}
\end{subfigure}
\hfill
\begin{subfigure}{0.45\textwidth}
\centering
\begin{tikzpicture}[scale=0.9]
\node[var] (X)  at (-1.5,0) {$X$};
\node[var] (Y)  at ( 3,0) {$Y$};
\node[var] (Z1) at ( 0,) {$Z^1$};
\node[var] (Z2) at ( 1.5,1) {$Z^2$};
\node[var] (W1) at ( 0,-1) {$W^1$};
\node[var] (W2) at ( 1.5,-1) {$W^2$};
\draw[->] (X) -- (Y);
\draw[->] (Z1) -- (Y);
\draw[->] (Z2) -- (Y);
\draw[->] (Z1) -- (Z2);
\draw[->] (Z1) -- (W2);
\draw[->] (Z2) -- (W1);
\draw[->] (X) -- (W1);
\draw[->] (X) -- (W2);
\draw[->] (W1) -- (W2);
\draw[->] (W1) -- (Y);
\draw[->] (W2) -- (Y);
\draw[->] (Z1) -- (W1);
\draw[->] (Z2) -- (W2);
\draw[dashed,<->,bend right=25] (Z1) to (X);
\draw[dashed,<->,bend right=45]  (Z2) to (X);
\end{tikzpicture}
\caption{}\label{fig:causalgraph_detailed}
\end{subfigure}
\caption{Causal graph depicting the SFM assumptions (a) and an explicit version with two mediators and two confounders (b).}\label{fig:causalgraph}
\end{figure}
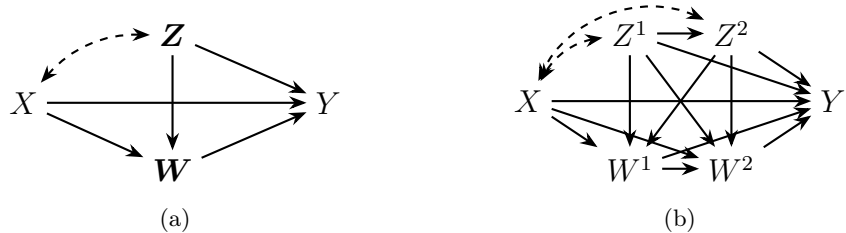

\vspace{-5mm}

\subsection{Causal Effects}
Consider an SCM $\mathcal{M}$ consistent with the SFM assumptions discussed in the previous section and, for the moment, with a single mediator $W$ and a single confounder $Z$. Given a target state $y\in\mathcal{Y}$ and a change in the state of the protected feature $X$ from $x_0$ to $x_1$, consider the following, composite, queries:
\begin{eqnarray}
\mathrm{TV}_{x_0,x_1}(y)\coloneq&&P(y \mid x_1)-P(y \mid x_0)\,,\label{eq:tv}\\
\mathrm{TE}_{x_0,x_1}(y)\coloneq&&P(Y_{x_1}=y)-P(Y_{x_0}=y)\,,\label{eq:te}\\
\mathrm{SE}_{x_0,x_1}(y)\coloneq&&\mathrm{TV}_{x_0,x_1}(y)- \mathrm{TE}_{x_0,x_1}(y)\,,\label{eq:se}\\
\mathrm{DE}_{x_0,x_1}(y)\coloneq&&P(Y_{x_1,W_{x_0}}=y)-P(Y_{x_0,W_{x_0}}=y)\,,\label{eq:de}\\
\mathrm{IE}_{x_0,x_1}(y)\coloneq&&P(Y_{x_0,W_{x_1}}=y)-P(Y_{x_0,W_{x_0}}=y)\,.\label{eq:ie}
\end{eqnarray}

The \emph{total variation} (TV) in Eq.~\eqref{eq:tv} is an observational query denoting the change in the probability of the target state $y$ induced by a change in the (observation of the) protected feature from $x_0$ to $x_1$. The \emph{total effect} (TE) in Eq.~\eqref{eq:te} is the change in the probability of having $Y=y$ when the treatment changes by interventions from $\mathrm{do}(X=x_0)$ to $\mathrm{do}(X=x_1)$. The mediator $W$ does not appear explicitly in the formula: its dependence on $X$ is implicit, as it attains its value \emph{naturally} according to the level of $X$ rather than through direct intervention. Importantly, the TE is defined through an intervention on $X$, rather than conditioning on its observed value, thereby blocking spurious associations. The \emph{spurious effect} (SE) in Eq.~\eqref{eq:se} is the difference between observational and interventional quantities. 

The TE can be further decomposed into components that isolate the direct and mediated causal pathways.
The (natural) \emph{direct effect} (DE) in Eq.~\eqref{eq:de} measures the expected increase in $Y$ from $X=x_0$ to $X=x_1$, with $W$ set to the value it would have taken naturally under $X=x_0$. It captures direct path from $X$ to $Y$ while turning off the mediator’s response to treatment. The (natural) \emph{indirect effect} (IE) in Eq.~\eqref{eq:ie} is evaluated by holding the variable $X$ at $x_0$. It captures the part of the TE passing through the mediator.

Let $\hat{P}(\bm{V})$ denote the endogenous distribution we learn from a dataset $\mathcal{D}$ of endogenous observations. Being an observational query, the value of TV can be obtained trivially obtained from $P(\bm{V})$. Under the SFM, all the effects are also \emph{identifiable}, i.e., they can be obtained from the observational distribution, without any information about the structural equations and the exogenous distributions. This is formally stated by the following result.

\begin{proposition}\label{prop:identification}
The following identification formulae hold:
\begin{eqnarray}
\textrm{TE}_{x_0,x_1}(y)=&&
\sum_{z} \left[ \hat{P}(y \mid x_1, z) - \hat{P}(y \mid x_0, z) \right] \hat{P}(z)\,,\label{eq:te2}\\
\textrm{DE}_{x_0,x_1}(y)=&&
\sum_{z,w} \big[ \hat{P}(y \mid x_1, z, w) - \hat{P}(y \mid x_0, z, w) \big] 
\hat{P}(w \mid x_0, z) \hat{P}(z)\,,\label{eq:de2}\\
\textrm{IE}_{x_0,x_1}(y)=&&
\sum_{z,w} \hat{P}(y \mid x_0, z, w)
\big[ \hat{P}(w \mid x_1, z) - \hat{P}(w \mid x_0, z) \big] \hat{P}(z)\,.\label{eq:ie2}
\end{eqnarray}
\end{proposition}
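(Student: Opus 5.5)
The plan is to derive each formula from the structural equations of an SCM compatible with the SFM, using only the independence statements that the graph in Fig.~\ref{fig:causalgraph_simple} implies among the exogenous variables. We write $U_X,U_Z,U_W,U_Y$ for the exogenous parents. Under the SFM, $U_X$ and $U_Z$ may be dependent (the bi-directed edge), while the pair $(U_X,U_Z)$, $U_W$ and $U_Y$ are mutually independent. Hence $Z=f_Z(U_Z)$, $W=f_W(X,Z,U_W)$ and $Y=f_Y(X,Z,W,U_Y)$. We treat $\hat P$ as the true observational distribution $P(\bm V)$ and assume positivity, so that every conditional appearing in Eqs.~\eqref{eq:te2}--\eqref{eq:ie2} is well defined.

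For the total effect, Eq.~\eqref{eq:te2}, we condition on $Z$ and apply the back-door criterion. Intervening on $X$ does not change $Z$, so $P(Y_{x}=y)=\sum_z P(Y_x=y\mid z)P(z)$. In $\mathcal{G}$, $Z$ blocks the only back-door path $X\xdashleftrightarrow[]{} Z\to\cdots\to Y$, so $Y_x\dsep X\mid Z$. It follows that $P(Y_x=y\mid z)=P(Y_x=y\mid x,z)=P(y\mid x,z)$, where the last step uses consistency. Concretely, $Y_x=f_Y(x,Z,f_W(x,Z,U_W),U_Y)$ depends only on $(Z,U_W,U_Y)$. Given $Z$, this collection is independent of $U_X$, because $(U_W,U_Y)$ is independent of $(U_X,U_Z)$. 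Taking the difference at $x_1$ and $x_0$ gives Eq.~\eqref{eq:te2}.

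The nested counterfactuals in DE and IE are the main obstacle, since $Y_{x,W_{x'}}$ combines two conflicting interventions. The key step is the \emph{cross-world independence} $Y_{x,w}\dsep W_{x'}\mid Z$. It holds because $Y_{x,w}=f_Y(x,Z,w,U_Y)$ and $W_{x'}=f_W(x',Z,U_W)$, with $U_Y$ and $U_W$ independent of each other and jointly independent of $U_Z$. This Markovian-type condition between the $W$ and $Y$ groups is exactly where the SFM's assumption of no hidden confounding between clusters enters. Alternatively, one could derive the same independence with the ctf-calculus of \cite{correaCounterfactualGraphicalModels2025}. Using it, we write
\begin{equation*}
P(Y_{x,W_{x'}}=y)=\sum_{z,w}P(Y_{x,w}=y\mid W_{x'}=w,z)P(W_{x'}=w\mid z)P(z)=\sum_{z,w}P(Y_{x,w}=y\mid z)P(W_{x'}=w\mid z)P(z)\,.
\end{equation*}
Each factor then reduces to observational terms by the back-door argument of the previous paragraph. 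First, $P(W_{x'}=w\mid z)=P(w\mid x',z)$, since $U_W$ is independent of $U_X$ given $Z$. Second, $P(Y_{x,w}=y\mid z)=P(y\mid x,z,w)$, which requires conditioning on $X=x$ and $W=w$. This is legitimate because, given $Z$, $U_Y$ is independent of $(U_X,U_W)$, and these determine $X$ and $W$.

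This yields the general expression $P(Y_{x,W_{x'}}=y)=\sum_{z,w}P(y\mid x,z,w)P(w\mid x',z)P(z)$. Substituting $(x,x')=(x_1,x_0)$ and $(x_0,x_0)$ and subtracting gives Eq.~\eqref{eq:de2}. Substituting $(x_0,x_1)$ and $(x_0,x_0)$ gives Eq.~\eqref{eq:ie2}, after factoring out $P(y\mid x_0,z,w)$. Finally, we note that the argument never used the internal structure of $Z$ or $W$, so it carries over unchanged to multidimensional $\bm Z$ and $\bm W$, as in Fig.~\ref{fig:causalgraph_detailed}. It also carries over when $X$ is an ancestor rather than a parent of $Y$: in that case the terms in $x$ simply drop out of $f_Y$.
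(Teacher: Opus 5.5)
Your proof is correct and follows essentially the same route as the paper. It uses the same chain of equalities: total probability over $Z$ (and over $W_{x'}$ for the nested terms), the cross-world independence of $Y_{x,w}$ and $W_{x'}$ given $Z$, conditional ignorability given $Z$ to bring $X$ (and $W$) into the conditioning set, and finally consistency; the only difference is that you verify each independence directly from the mutual independence of $(U_X,U_Z)$, $U_W$ and $U_Y$, whereas the paper cites Rules 1 and 2 of the ctf-calculus.
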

These formulae have been derived in \cite{pleckoCausalFairnessAnalysis2024}. An identification formula from the SE trivially descends from Eq.~\eqref{eq:se}, Eq.~\eqref{eq:tv}, and Eq.~\eqref{eq:te2}. Analogous identification formulae can likewise be derived for the \emph{specific} effects obtained by further conditioning on $X$ or on $Z$ \cite{pleckoCausalFairnessAnalysis2024}.

\subsection{Effect Decompositions}
We can intend Eq.~\eqref{eq:se} not only as the SE definition, but also as a decomposition formula stating that the TV decomposes in the sum of two terms: the TE modelling the causal component of the relation, and the SE corresponding to correlational effects. A zero SE means that the relation between the two variables is purely causal. For TE, a further decomposition is described in the next proposition. 

\begin{proposition}\label{prop:total_Eff_decomp}
The total effect can be written in terms of direct and indirect effects,  where the indirect effect is taken in its reverse form:
\begin{equation}\label{eq:dec_te}
\mathrm{TE}_{x_0,x_1} (y)=\mathrm{DE}_{x_0,x_1}(y)-\mathrm{IE}_{x_1,x_0}(y)\,.
\end{equation}
Additionally, in case of linear models, we can rewrite Eq.~\eqref{eq:dec_te} as:

\begin{equation}\label{eq:dec_te2}
\mathrm{TE}_{x_0,x_1} (y)=\mathrm{DE}_{x_0,x_1}(y)+\mathrm{IE}_{x_0,x_1}(y)\,.
\end{equation}
\end{proposition}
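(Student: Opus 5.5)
The first identity in Eq.~\eqref{eq:dec_te} is a purely algebraic consequence of the definitions in Eqs.~\eqref{eq:te}, \eqref{eq:de} and \eqref{eq:ie}, once one notes the consistency property of nested counterfactuals: $Y_{x,W_x}=Y_x$ for every $x$. This holds because, under $do(X=x)$, the mediator takes the value $W_x(\bm u)$ and $Y_x(\bm u)=f_Y(x,W_x(\bm u),Z(\bm u),\bm u)$ coincides with $Y_{x,W_x}(\bm u)$ unit by unit. With this, write the reverse indirect effect by swapping the roles of $x_0$ and $x_1$ in Eq.~\eqref{eq:ie}:
\begin{equation*}
\mathrm{IE}_{x_1,x_0}(y)=P(Y_{x_1,W_{x_0}}=y)-P(Y_{x_1,W_{x_1}}=y)=P(Y_{x_1,W_{x_0}}=y)-P(Y_{x_1}=y).
\end{equation*}
Similarly, $\mathrm{DE}_{x_0,x_1}(y)=P(Y_{x_1,W_{x_0}}=y)-P(Y_{x_0}=y)$. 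Subtracting, the cross-world term $P(Y_{x_1,W_{x_0}}=y)$ cancels, leaving $P(Y_{x_1}=y)-P(Y_{x_0}=y)=\mathrm{TE}_{x_0,x_1}(y)$. As a sanity check, I would also verify the same identity at the level of the identification formulae of Proposition~\ref{prop:identification}, adding and subtracting $\sum_{z,w}\hat P(y\mid x_1,z,w)\hat P(w\mid x_0,z)\hat P(z)$ and marginalising over $w$ using $\sum_w \hat P(y\mid x,z,w)\hat P(w\mid x,z)=\hat P(y\mid x,z)$. This makes the statement hold for the estimated quantities as well.

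For the linear case, the additive form Eq.~\eqref{eq:dec_te2} is naturally a statement about expectations, i.e.\ the effects computed for $E[Y]$ rather than for a state $y$; I would make this interpretation explicit. Take structural equations $W=\alpha X+\gamma Z+U_W$ and $Y=\beta X+\delta W+\eta Z+U_Y$. Then $E[Y_{x,W_{x'}}]=\beta x+\delta\alpha x'+c$, with $c$ not depending on $x,x'$. Computing gives $\mathrm{DE}_{x_0,x_1}=\beta(x_1-x_0)$ and $\mathrm{IE}_{x_0,x_1}=\delta\alpha(x_1-x_0)$. Moreover $\mathrm{IE}_{x_1,x_0}=\delta\alpha(x_0-x_1)=-\mathrm{IE}_{x_0,x_1}$, so substituting into Eq.~\eqref{eq:dec_te} yields Eq.~\eqref{eq:dec_te2}. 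With multiple mediators, the same argument goes through with path-coefficient products: linearity makes $E[Y_{x,\bm W_{x'}}]$ affine and separable in $(x,x')$, with no interaction term.

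The main obstacle is interpretive rather than technical. For the linear claim, I must state precisely that the effects are taken on the mean of a continuous $Y$ (or for a linear-probability model), since for a generic state $y$ linearity of $f_Y$ does not make $P(Y_{x,W_{x'}}=y)$ separable. The key point to emphasise is the absence of an $x\cdot x'$ interaction, which is exactly what makes $\mathrm{IE}$ antisymmetric in its indices.
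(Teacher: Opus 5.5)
Your proof is correct and essentially the paper's. The paper also uses composition, $Y_{x,W_x}=Y_x$, and adds and subtracts the cross-world term $P(Y_{x_1,W_{x_0}}=y)$ to get Eq.~\eqref{eq:dec_te}; it then obtains Eq.~\eqref{eq:dec_te2} only from the one-line assertion that $\mathrm{IE}$ is antisymmetric in linear SCMs. Your explicit linear computation supplies the justification the paper omits, and your restriction to mean effects (or to $\hat P(y\mid x,z,w)$ additive in $x$ and $w$) is in fact necessary, since for a fixed state $y$ linear structural equations do not give antisymmetry: with $W=X+U_W$, $U_W$ uniform on $\{0,1\}$, $Y=X+W$, $x_0=0$, $x_1=1$ and $y=1$, one gets $\mathrm{IE}_{x_0,x_1}(y)=0$ but $\mathrm{IE}_{x_1,x_0}(y)=1/2$.
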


This decomposition allows us to separate the contributions of the direct and indirect effects in practice, as illustrated in the following example.

\begin{example}\label{ex:adult_te_decomp}
{\it
In the revised version of the UCI \emph{Adult Census} dataset \cite{dingRetiringAdultNew2021}, we consider the following variables: (i) the binary variable \emph{gender}, with states \emph{male} and \emph{female}, which we regard as the protected feature $X$; (ii) a binary variable indicating whether \emph{income} exceeds $\$50k$, regarded as the target variable $Y$; (iii) \emph{education}, a categorical variable with sixteen states describing the education level, which we treat as a confounder $Z$ between $X$ and $Y$; and (iv) the integer feature \emph{hours} (per week), regarded as a mediator $W$ between $X$ and $Y$, which we discretize into 20-hour bins (0--20, $\ldots$, 80+).}
\vspace*{-3ex}
\begin{figure}[htp!]
\centering
\includegraphics[width=9cm]{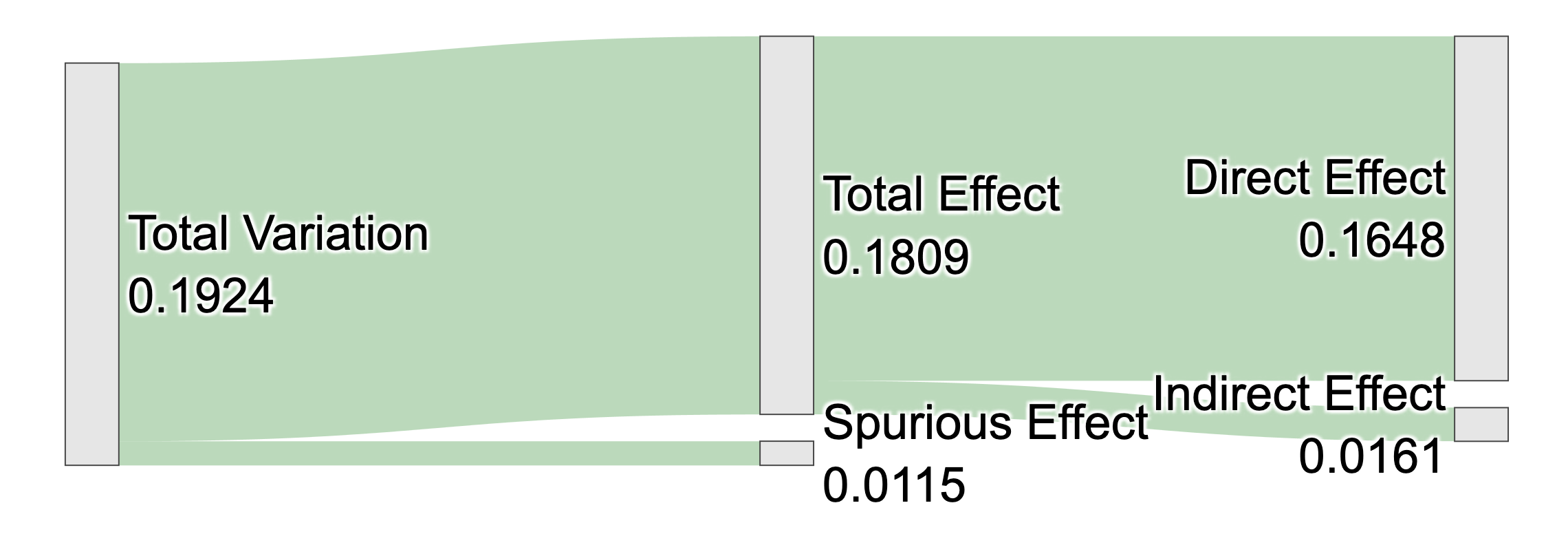}
\caption{Decomposition of the effect of \emph{gender} on \emph{income} for the \emph{Adult} dataset.}
\label{fig:adult_sankey_decomp_ex1}
\end{figure}
\vspace*{-3ex}

{\it We adopt the SFM assumptions and consider an SCM as in Fig.~\ref{fig:causalgraph_simple}. The fairness descriptors in Eqs.~\eqref{eq:tv}--\eqref{eq:ie} are computed using the formulae in Prop.~\ref{prop:identification}, focusing on the probability of low income when $X$ transitions from \emph{male} to \emph{female}. Positive values, therefore, correspond to discrimination against females. For this model, all descriptors are positive, and their corresponding decomposition is illustrated as a Sankey diagram in Fig.~\ref{fig:adult_sankey_decomp_ex1}. Despite a relatively large total variation ($\simeq 19\%$), we observe that the dominant component is causal, with the direct effect accounting for the majority of the disparity ($\simeq 16\%$). Under the adopted causal assumptions, the higher probability of low income among females can be interpreted as being partly explained by the causal effect associated with gender.}
\end{example}

\subsection{More Decompositions: Multiple Confounders and Mediators}
In more general settings, the relationship between the protected feature $X$ and the target $Y$ may be mediated and confounded by multiple variables. This does not preclude the application of the identification formulae discussed above, as the mediators and confounders can simply be treated as joint variables and, when necessary, the expressions can be evaluated by summing over all joint states. However, in \cite{pleckoCausalFairnessAnalysis2024}, decomposition formulae are derived to identify the separate contribution of each mediator to the IE, and of each confounder to the SE. We begin by considering the mediators.

\begin{proposition}\label{prop:ie}
Consider an SCM with $k$ mediators $\bm{W}\coloneqq (W^1,\ldots,W^k)$ sorted in a topological order. For any $1\leq i \leq k$, let $\bm{W}^{\leq i} \coloneq (W^1, \dots, W^i)$ and $\bm{W}^{>i} \coloneq (W^{i+1}, \dots, W^k)$ and set $\bm{W}^{\leq 0} \coloneq \emptyset$. Eq.~\eqref{eq:ie} decomposes as:
\begin{equation}
\mathrm{IE}_{x_0,x_1}(y) = \sum_{i=1}^{k} \mathrm{IE}_{x_0,x_1}^{W^i}(y)\,,
\label{eq:multipleeffW}
\end{equation}
where the components are:
\begin{equation}\label{eq:iew}
\mathrm{IE}_{x_0,x_1}^{W^{i}}(y) \coloneqq P\big(y_{x_0,(\bm{W}^{\leq i})_{x_1},(\bm{W}^{>i})_{x_0}}\big) - P\big(y_{x_0,(\bm{W}^{\leq i-1})_{x_1},(\bm{W}^{>i-1})_{x_0}} \big) \,.
\end{equation}
Moreover, $P(y_{x_0,(\bm{W}^{\leq i})_{x_1},(\bm{W}^{>i})_{x_0}})$ can be identified as:
\begin{equation}\label{eq:id_sfm_multimediator}
\sum_{\bm{w}\in\mathcal{\bm{W}}} \hat{P}(y\mid x_0,\bm{w})
\prod_{j=1}^{i} \hat{P}\left(w^j\mid x_1,\bm{w}^{\leq j-1}\right)\prod_{j=i+1}^{k}\hat{P}\left(w^j\mid x_0,\bm{w}^{\leq j-1}\right)\,,
\end{equation}
where, for each $j$, the values of $\bm{w}^{\leq j-1}$ are those consistent with $\bm{w}$.
\end{proposition}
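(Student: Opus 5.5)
The proof splits into two parts: the decomposition in Eq.~\eqref{eq:multipleeffW} and the identification formula in Eq.~\eqref{eq:id_sfm_multimediator}.

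\textbf{Decomposition.} This part is purely algebraic, via a telescoping sum. For $i=0,\dots,k$, set
\[
A_i\coloneqq P\big(y_{x_0,(\bm{W}^{\leq i})_{x_1},(\bm{W}^{>i})_{x_0}}\big),
\]
so that $\mathrm{IE}^{W^i}_{x_0,x_1}(y)=A_i-A_{i-1}$ and the sum over $i$ collapses to $A_k-A_0$. It then remains to check the two endpoints:
\begin{itemize}
\item $A_k=P(Y_{x_0,\bm{W}_{x_1}}=y)$, since every mediator receives the $x_1$ counterfactual value;
\item $A_0=P(Y_{x_0,\bm{W}_{x_0}}=y)$, which equals $P(Y_{x_0}=y)$ by composition.
\end{itemize}
Hence $A_k-A_0$ is exactly Eq.~\eqref{eq:ie}, with $W$ read as the joint vector $\bm{W}$.

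\textbf{Identification: making the nested counterfactual explicit.} The definition in Eq.~\eqref{eq:iew} leaves implicit how a mediator $W^j$ with $j>i$ responds to the earlier mediators. I would adopt the path-specific reading, as in the paper's source: $W^j$ for $j\le i$ is generated under $x_1$, and $W^j$ for $j>i$ under $x_0$, each fed the already-generated upstream mediators. Writing $\bm{w}^{<j}$ for $\bm{w}^{\leq j-1}$, define recursively
\[
\tilde W^j=W^j_{x_1,\tilde{\bm W}^{<j}}\ (j\le i),\qquad \tilde W^j=W^j_{x_0,\tilde{\bm W}^{<j}}\ (j>i),
\]
and $Y$ then evaluates $Y_{x_0,\tilde{\bm W}}$. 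Any confounder $Z$ is also kept at its natural value.

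Summing over the joint mediator states $\bm{w}$ gives
\[
A_i=\sum_{\bm w}P\Big(Y_{x_0,\bm w}=y,\ \textstyle\bigwedge_j W^j_{x^{(j)},\bm w^{<j}}=w^j\Big),
\]
where $x^{(j)}$ equals $x_1$ for $j\le i$ and $x_0$ otherwise.

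\textbf{Main obstacle: factorising the cross-world joint.} This is the key step. The variables $Y_{x_0,\bm w}$ and $W^j_{x^{(j)},\bm w^{<j}}$ are counterfactuals under fully fixed parents apart from $Z$. Under the SFM there is no hidden confounding between the groups $X$, $\{Z\}$, $\bm W$, $Y$, so the exogenous terms of the different groups are independent once we condition on $Z$. I would therefore argue either via Markovianity within the mediator block or via ctf-calculus independence across worlds. The difficulty is that the SFM is agnostic to latent confounding among the $W^j$ themselves; this is the delicate point, and I would assume the within-block structure is unconfounded, as implicitly done in the source.

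Under that assumption the joint factorises into $P(Y_{x_0,\bm w}=y)\prod_j P(W^j_{x^{(j)},\bm w^{<j}}=w^j)$, conditionally on $z$. Each factor is then converted to an observational conditional by the consistency and exchangeability (back-door) rules: $P(y\mid x_0,\bm w,z)$ and $P(w^j\mid x^{(j)},\bm w^{<j},z)$.

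Finally, averaging over $\hat P(z)$ gives the stated formula. This averaging is presumably suppressed in Eq.~\eqref{eq:id_sfm_multimediator}, or is absent when there is no $Z$. Setting $k=1$ recovers Eq.~\eqref{eq:ie2} of Prop.~\ref{prop:identification}, which serves as a consistency check.
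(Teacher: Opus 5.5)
Your proposal is correct and follows the paper's route. The decomposition is the same telescoping sum $A_k-A_0$. The identification step also matches: you unnest, factorise the cross-world joint using the independence of the exogenous terms of $Y$ and of each $W^j$ given $\bm Z$, and convert each factor to an observational conditional by consistency; the paper phrases the factorisation as Rule~2 of the ctf-calculus on the ancestral multi-world network, plus a chain rule over the mediators.

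Two of your refinements are exactly what the paper's steps in Eqs.~\eqref{eq:proof_decom_ie_4}--\eqref{eq:proof_decom_ie_6} tacitly rely on. These are the explicit nested reading $W^j_{x_0,\tilde{\bm W}^{<j}}$ for $j>i$ and the assumption of no latent confounding inside the mediator block. Under the plain reading $(\bm W^{>i})_{x_0}$ the product formula fails in general, because it would involve the unidentified coupling of $\bm W^{\leq i}_{x_1}$ and $\bm W^{\leq i}_{x_0}$. You are also right that Eq.~\eqref{eq:id_sfm_multimediator} should keep $\bm z$ in every conditional, together with the outer average over $\hat P(\bm z)$.
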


The above proposition enables a decomposition of the IE given a topological ordering of the mediators. If the underlying graph admits multiple such orderings, a question is whether different choices lead to the same decomposition. A first answer to this question is provided by the following novel result.

\begin{theorem}
Say that the mediators $(W^1,\dots,W^k)$ are conditionally independent given $X$ and $Z$. If the model is also linear, then the decomposition in Eq.~\eqref{eq:multipleeffW} is unaffected by the ordering we adopt.
\label{thm:multiple_mediators}
\end{theorem}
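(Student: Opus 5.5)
Given linearity, I will show that each component $\mathrm{IE}^{W^i}_{x_0,x_1}(y)$ depends only on $W^i$ and not on where $W^i$ sits in the chosen ordering. Since $y$ may be continuous in the linear case, I will work with expectations: $P(y_{\dots})$ is read as $E[Y_{\dots}]$, or for binary $Y$ with a linear probability model this is the same object.

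\textbf{Setup.} Write the linear structural equations as
\[
Y = \alpha X + \bm\beta^\top \bm Z + \sum_{j=1}^k \gamma_j W^j + U_Y,
\]
\[
W^j = a_j X + \bm b_j^\top \bm Z + \sum_{l<j} c_{jl} W^l + U_{W^j}.
\]

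\textbf{Step 1: no mediator-to-mediator edges.} The hypothesis that the mediators are conditionally independent given $X$ and $\bm Z$, together with the SFM assumption of no hidden confounding among mediators, forces $c_{jl}=0$ for all $l<j$. If some $c_{jl}\neq 0$, then $W^l\to W^j$ would be an active path given $(X,\bm Z)$, so the two would be dependent (faithfulness is implicitly assumed here). Each mediator is therefore a function of $(X,\bm Z,U_{W^j})$ only.

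\textbf{Step 2: evaluate the nested counterfactual in Eq.~\eqref{eq:iew}.} The potential response $W^j_{x}$ does not depend on which other mediators are set to their $x_1$- or $x_0$-values. Hence
\[
E\big[Y_{x_0,(\bm W^{\le i})_{x_1},(\bm W^{>i})_{x_0}}\big]
= \alpha x_0 + \bm\beta^\top E[\bm Z] + \sum_{j\le i}\gamma_j E[W^j_{x_1}] + \sum_{j>i}\gamma_j E[W^j_{x_0}],
\]
with $E[W^j_x]= a_j x + \bm b_j^\top E[\bm Z] + E[U_{W^j}]$. Taking the difference in Eq.~\eqref{eq:iew}, everything cancels except the $j=i$ term:
\[
\mathrm{IE}^{W^i}_{x_0,x_1} = \gamma_i a_i (x_1-x_0).
\]
I would cross-check this against the identification formula Eq.~\eqref{eq:id_sfm_multimediator}. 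With $c_{jl}=0$ the conditionals $\hat P(w^j\mid x,\bm w^{\le j-1})$ reduce to $\hat P(w^j\mid x)$, after marginalising $\bm Z$ appropriately, and the linear regression of $Y$ on $(x_0,\bm w)$ gives the same cancellation.

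\textbf{Step 3: ordering invariance.} Any other topological ordering $\sigma$ assigns to mediator $W^{\sigma(m)}$ the component $\gamma_{\sigma(m)} a_{\sigma(m)}(x_1-x_0)$ by the identical computation. The multiset of components is therefore the same, and each mediator's contribution is unchanged, which proves the claim. The sum also equals $\mathrm{IE}_{x_0,x_1}$, consistent with Prop.~\ref{prop:ie}.

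\textbf{Main obstacle.} The delicate point is Step 1: making precise that conditional independence of the mediators eliminates both directed edges between them and any residual dependence that would couple the nested counterfactuals. If the intended reading allows edges with cancelling coefficients, I would instead argue directly that linearity plus conditional independence makes $E[W^j\mid x,\bm w^{\le j-1},\bm z]=E[W^j\mid x,\bm z]$. Plugging this into Eq.~\eqref{eq:id_sfm_multimediator} with $\bm Z$ included shows the product of conditionals factorises independently of the ordering, and linearity of $E[Y\mid x_0,\bm z,\bm w]$ in $\bm w$ then separates the contributions term by term.
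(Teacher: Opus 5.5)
Your proof is correct under your reading, namely a fully linear SCM with the components read as expectations. It takes a different route from the paper, though.

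\textbf{The paper's route.} The paper never works with the structural equations. It restricts to two mediators and writes the $W^1$-component through the identification formula of Prop.~\ref{prop:ie}. It then uses the conditional independence of $W^1$ and $W^2$ given $(X,Z)$ to replace $P(w^2\mid x_0,w^1,z)$ by $P(w^2\mid x_0,z)$. Finally, it reads linearity as additivity of the outcome regression, $P(y\mid x_0,w^1,w^2,z)=\phi_1(w^1,z)+\phi_2(w^2,z)$. The $\phi_2$ term vanishes because $\sum_{w^1}[P(w^1\mid x_1,z)-P(w^1\mid x_0,z)]=0$. The $\phi_1$ term collapses because $\sum_{w^2}P(w^2\mid x,z)=1$, whether $W^2$ is held at $x_0$ or at $x_1$. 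So the component involves $W^1$ alone in either order. This is essentially the fallback sketched in your last paragraph.

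\textbf{What each approach buys.} Your main route evaluates the nested counterfactuals directly and obtains the closed form $\gamma_i a_i(x_1-x_0)$. This makes the invariance manifest and bypasses the identification machinery. The price is stronger assumptions:
\begin{itemize}
\item linear equations for the mediators as well;
\item expectation-valued effects rather than $P(y)$ for a fixed state $y$;
\item faithfulness in Step 1.
\end{itemize}
The paper's argument needs only additivity of $P(y\mid x_0,\bm w,z)$ in the mediators and allows arbitrary mediator distributions. It also uses the conditional independence directly, so no faithfulness enters.

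\textbf{Two remarks on your version.} First, Step 1 is not actually needed for the invariance. Read Eq.~\eqref{eq:iew} literally, with each $W^j_x$ the natural response under $\mathrm{do}(X=x)$, as in the paper's unnesting. Then linearity of $f_Y$ alone gives the $i$-th component as $\gamma_i(\mathbb{E}[W^i_{x_1}]-\mathbb{E}[W^i_{x_0}])$, whatever the $c_{jl}$ are. So faithfulness can be dropped; $c_{jl}=0$ only serves to turn this into $\gamma_i a_i(x_1-x_0)$.

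Second, your cross-check in Step 2 is off. The mediators are independent only given $(X,\bm Z)$, and $\bm Z$ is a common parent of all of them. Hence $\hat P(w^j\mid x,\bm w^{\le j-1})$ does not reduce to $\hat P(w^j\mid x)$ after marginalising $\bm Z$. The check must keep $\bm z$ in every conditioning set, as the paper's proof of the theorem and your own final paragraph do.
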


The following result is the analogous of Prop.~\ref{prop:ie} for the SE in the case of a SCM having multiple confounders $\bm{Z}$.

\begin{proposition}\label{prop:spur_more}
In an SFM setting, consider a SCM with $k$ confounders $\bm{Z}\coloneqq (Z^1,\ldots,Z^k)$ already sorted in a topological order. We further assume that there is a one-to-one correspondence between $\bm{U}$ and $\bm{V}$, such that for any distinct pair $V^i, V^j \in \bm{V}$, $(\mathrm{Pa}_{V^{i}} \setminus \bm{V}) \cap (\mathrm{Pa}_{V^{j}} \setminus \bm{V}) = \emptyset$. For any $1\leq i \leq k$, let $\bm{Z}^{\leq i}\coloneq (Z^1, \dots, Z^i)$ and $\bm{Z}^{>i}\coloneq (Z^{i+1}, \dots, Z^k)$ and set $\bm{Z}^{\leq 0} \coloneq \emptyset$. Eq.~\eqref{eq:se} decomposes as:
\begin{equation}\label{eq:multipleZ}
\mathrm{SE}_{x_0,x_1}(y)=\sum_{i=1}^k \mathrm{SE}_{x_0,x_1}^{Z^i}(y)\,,
\end{equation}
where the components can be identified as follows:
\begin{align}\label{eq:sesum}
\mathrm{SE}^{Z^i}_{x_0, x_1}(y) \coloneq \sum_{\bm{z}^{\leq i-1}} P(y \mid x_1, \bm{z}^{\leq i-1}) P(\bm{z}^{\leq i-1}) 
               - \sum_{\bm{z}^{\leq i}} P(y \mid x_1, \bm{z}^{\leq i}) P(\bm{z}^{\leq i}) \\
     -  \sum_{\bm{z}^{\leq i-1}} P(y \mid x_0, \bm{z}^{\leq i-1}) P(\bm{z}^{\leq i-1}) 
               + \sum_{\bm{z}^{\leq i}} P(y \mid x_0, \bm{z}^{\leq i}) P(\bm{z}^{\leq i})\,,\nonumber
\end{align}
for each $i$, where the values of $\bm{z}^{\leq i-1}$ are those consistent with $\bm{z}$.
\end{proposition}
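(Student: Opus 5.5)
The core of the argument is a telescoping identity, so I would first reduce the statement to the two endpoints of the sum in Eq.~\eqref{eq:multipleZ}. For $0\le i\le k$ and $x\in\{x_0,x_1\}$, set
\[
A_i(x)\coloneqq\sum_{\bm{z}^{\leq i}} P(y\mid x,\bm{z}^{\leq i})\,P(\bm{z}^{\leq i}),
\]
with the convention that for $i=0$ the sum is over the empty configuration. Then $A_0(x)=P(y\mid x)$. Eq.~\eqref{eq:sesum} reads $\mathrm{SE}^{Z^i}_{x_0,x_1}(y)=[A_{i-1}(x_1)-A_i(x_1)]-[A_{i-1}(x_0)-A_i(x_0)]$. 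Summing over $i=1,\dots,k$ telescopes to
\[
[A_0(x_1)-A_0(x_0)]-[A_k(x_1)-A_k(x_0)].
\]
The first bracket is $\mathrm{TV}_{x_0,x_1}(y)$ by Eq.~\eqref{eq:tv}. The second bracket is $\sum_{\bm z}[P(y\mid x_1,\bm z)-P(y\mid x_0,\bm z)]P(\bm z)$, which equals $\mathrm{TE}_{x_0,x_1}(y)$ by the identification formula Eq.~\eqref{eq:te2} of Prop.~\ref{prop:identification}, applied with $\bm Z$ treated as a joint confounder. Eq.~\eqref{eq:se} then gives $\mathrm{SE}_{x_0,x_1}(y)=\mathrm{TV}-\mathrm{TE}=\sum_i \mathrm{SE}^{Z^i}_{x_0,x_1}(y)$.

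The second task is to justify calling the components ``identified'', i.e.\ to show they are observational expressions of genuine counterfactual quantities, as in \cite{pleckoCausalFairnessAnalysis2024}. For each $i$ I would define the counterfactual $Q_i(x)$ as the probability of $Y_x=y$ in a world where $X$ is set to $x$ and only the confounders $\bm Z^{>i}$ are decoupled from $X$. Concretely, $\bm Z^{\leq i}$ keeps its natural joint distribution with $X$ (conditioning on it), while the exogenous noise of $\bm Z^{>i}$ is resampled. Then $\mathrm{SE}^{Z^i}$ is $[Q_{i-1}(x_1)-Q_i(x_1)]-[Q_{i-1}(x_0)-Q_i(x_0)]$.

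I would then show $Q_i(x)=A_i(x)$. This is where the assumption that $\bm U$ and $\bm V$ are in one-to-one correspondence with disjoint exogenous parents is used. With independent noises, conditioning on $\bm Z^{\leq i}=\bm z^{\leq i}$ and fixing $X=x$ leaves the remaining variables, $\bm Z^{>i}$, $\bm W$ and $Y$, generated by their own structural equations with independent exogenous inputs. Because $\bm Z$ is topologically ordered, the parents of $Z^j$ within $\bm Z$ all come before it, which justifies a back-door style adjustment. By the three-step procedure and the chain rule this gives
\[
Q_i(x)=\sum_{\bm z^{\leq i}}P(\bm z^{\leq i})\,P(y_x\mid \bm z^{\leq i}),
\]
and $P(y_x\mid \bm z^{\leq i})=P(y\mid x,\bm z^{\leq i})$ follows from ignorability of $X$ given its (observed) ancestors in the Markovian model.

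The step I expect to be the main obstacle is this last ignorability claim for intermediate $i$. Under the SFM, $X$ may be confounded with $\bm Z^{>i}$, so conditioning only on $\bm Z^{\leq i}$ does not generally block the back-door paths through $\bm Z^{>i}$. The Markovian hypothesis must therefore be used carefully: the bi-directed edges $X\xdashleftrightarrow[]{} Z^j$ should be reinterpreted as directed edges $Z^j\to X$ from a Markovian refinement. I would then verify with the ctf-calculus \cite{correaCounterfactualGraphicalModels2025} that the intended nested counterfactual, and not a plain interventional one, is what $A_i(x)$ identifies. If this proves delicate, I would fall back on taking Eq.~\eqref{eq:sesum} as the definition, as the statement's ``$\coloneq$'' permits. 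In that case the telescoping argument alone establishes Eq.~\eqref{eq:multipleZ}.
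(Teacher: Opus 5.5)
Your telescoping argument is correct and is all the statement requires. Eq.~\eqref{eq:sesum} \emph{defines} $\mathrm{SE}^{Z^i}_{x_0,x_1}(y)$ as $[A_{i-1}(x_1)-A_i(x_1)]-[A_{i-1}(x_0)-A_i(x_0)]$. The sum therefore collapses to $[A_0(x_1)-A_0(x_0)]-[A_k(x_1)-A_k(x_0)]=\mathrm{TV}_{x_0,x_1}(y)-\mathrm{TE}_{x_0,x_1}(y)$, where the second bracket is Eq.~\eqref{eq:te2} with $\bm Z$ taken jointly.

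The paper gives no proof of its own and defers to Ple\v{c}ko and Bareinboim. There the components are treated as counterfactual contrasts, and Markovianity (the disjoint-noise hypothesis in the statement) is what makes each of them identifiable. Your route is more elementary and shows that the additive identity itself is purely algebraic. It holds for any ordering of $\bm Z$ and uses only back-door identification of the total effect, so the Markovian hypothesis plays no role in it. What it does not provide is a causal meaning for the individual summands; that is what the counterfactual route is for.

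The second half of your plan is not merely delicate but wrong, and should be dropped.
\begin{itemize}
\item With your verbal definition of $Q_i(x)$ (noise of $\bm Z^{\le i}$ abducted from $X=x$, noise of $\bm Z^{>i}$ resampled), you get $Q_0(x)=P(y_x)$ and $Q_k(x)=P(y\mid x)$. This is the reverse of $A_0(x)=P(y\mid x)$ and $A_k(x)=P(y_x)$. So $Q_i=A_i$ already fails at $i=0$, and your $Q$-contrasts would sum to $-\mathrm{SE}_{x_0,x_1}(y)$.
\item Your displayed formula $\sum_{\bm z^{\le i}}P(\bm z^{\le i})P(y_x\mid\bm z^{\le i})$ does not match that definition, which would need $P(\bm z^{\le i}\mid x)$. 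By total probability it equals $P(y_x)$ for every $i$.
\item The ignorability step $P(y_x\mid\bm z^{\le i})=P(y\mid x,\bm z^{\le i})$ would then force $A_i(x)=P(y_x)$ for all $i$. At $i=0$ this says $P(y\mid x)=P(y_x)$, i.e.\ no confounding. No Markovian refinement repairs this.
\item Carrying out your verbal definition correctly in a Markovian model gives $\sum_{\bm z}P(y\mid x,\bm z)P(\bm z^{>i}\mid\bm z^{\le i})P(\bm z^{\le i}\mid x)$. For $0<i<k$ this is in general neither $A_i(x)$ nor $A_{k-i}(x)$; a two-confounder linear model already separates them. So this line cannot reproduce Eq.~\eqref{eq:sesum}.
\end{itemize}
Your fallback, reading Eq.~\eqref{eq:sesum} as a definition and concluding by telescoping, is the proof to present.
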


Note that we cannot establish an analogue of Th.~\ref{thm:multiple_mediators} for confounders in place of mediators. The reason is that the SE decomposition in Eq.~\eqref{eq:sesum} is constructed sequentially by conditioning on progressively larger subsets of confounders according to a given ordering.

An example of the above decompositions is the following.

\begin{example}\label{ex:adult_extended}
{\it We aim to quantify the gender-based income disparity for the same dataset as in Ex.~\ref{ex:adult_te_decomp}. We extend the sets of observed variables to include an additional confounder and an additional mediator. The confounders are consequently \emph{relationship} ($Z^1$) and \emph{native country} ($Z^2$), while the mediators are \emph{hours} ($W^1$) and \emph{occupation} ($W^2$). In practice, we work with an SCM whose graph is the one in Fig.~\ref{fig:causalgraph_detailed}. We compute the fairness descriptors defined in Eqs.~\eqref{eq:tv}--\eqref{eq:ie}, and retrieve the individual contributions of the two mediators and the two confounders by applying the decompositions presented in Props.~\ref{prop:ie} and \ref{prop:spur_more}. The results are in Tab.~\ref{tab:adult}. The large disparity ($\mathrm{TV} \simeq 17\%$) is driven by the SE, whose main contribution is the one induced by the \emph{relationship}. The contribution of \emph{native-country} is instead negligible. The total effect is marginal, demonstrating no substantial discrimination. Moreover, most of the causal contribution stems from indirect pathways, where the effect mediated by \emph{occupation} is approximately twice of that mediated by \emph{hours-per-week}, however still not substantive in magnitude.  In conclusion, at the speculative level, one can observe that the macro-level disparity is primarily an artefact of the upstream confounding introduced by \emph{relationship}.}
\end{example}

\begin{table}[htp!]
\centering
\begin{tabular}{lp{1cm}rrrr}
\toprule
Effect&&\multicolumn{3}{c}{Values}&{\footnotesize\color{black!50}{$\%$ of $|\mathrm{TV}|$}}\\
\midrule
$\mathrm{TV}_{x_0, x_1}(y)$&&$0.1736$&&&{\footnotesize\color{black!50}{$100$}}\\
\midrule
$\mathrm{TE}_{x_0,x_1}(y)$&&$0.0053$&&&{\footnotesize\color{black!50}{$3$}}\\
\quad $\mathrm{DE}_{x_0,x_1}(y)$&&&$-0.0051$\\
\quad $\mathrm{IE}_{x_1,x_0}(y)$&&&$-0.0104$\\
\quad $\hookrightarrow$ $\mathrm{IE}^{W^1}_{x_1,x_0}(y)$&&&&$-0.0027$\\
\quad $\hookrightarrow$ $\mathrm{IE}^{W^2}_{x_1,x_0}(y)$&&&&$-0.0077$\\
\midrule
$\mathrm{SE}_{x_0, x_1}(y)$&&$0.1683$&&&{\footnotesize\color{black!50}{$97$}}\\
\quad $\hookrightarrow$ $\mathrm{SE}^{Z^1}_{x_0, x_1}(y)$&&&$0.1685$\\
\quad $\hookrightarrow$ $\mathrm{SE}^{Z^2}_{x_0, x_1}(y)$&&&$-0.0002$\\
\bottomrule
\end{tabular}
\caption{Complete decomposition of the effect of \emph{gender} on \emph{income} for the \emph{Adult} dataset between \emph{male} ($x_0$) and \emph{female} ($x_1$) groups.\label{tab:adult}}
\end{table}

\section{Beyond SFM}\label{sec:theory_new}
\subsection{Non-Binary Targets} The fairness descriptors in Eqs.~\eqref{eq:tv}--\eqref{eq:ie} are defined with respect to a specific target state $y\in\mathcal{Y}$. When $Y$ is binary, it is straightforward to verify that computing the same descriptor for both states yields values with equal magnitude but opposite sign. For example, if $\mathcal{Y}=\{y',y''\}$, then $\mathrm{TV}_{x_0,x_1}(y')=-\mathrm{TV}_{x_0,x_1}(y'')$. This conjugacy no longer holds when $Y$ has three or more states. 

In such cases, one can compute the descriptors separately for each state of $Y$, and then consider an aggregated measure. Distributional dissimilarity measures--such as the widely used KL divergence---could in principle be employed for this purpose. However, these measures fail to capture the \emph{sign} of the fairness violation. For this reason, when $Y$ is numerical, a more informative approach might be to consider  \emph{expectations}, which yield a signed measure of the effect. For instance, for the DE in Eq.~\eqref{eq:de}, one may consider:
\begin{equation}\label{eq:de_exp}
\mathrm{DE}_{x_0,x_1}(Y)\coloneq\mathbb{E}[Y_{x_1,W_{x_0}}]-\mathbb{E}[Y_{x_0,W_{x_0}}]\,.
\end{equation}
Here $\mathbb{E}[V] \coloneq \sum_{v\in\mathcal{V}} v \cdot P(V=v)$. Being linear functionals, coping with expectations as in Eq.~\eqref{eq:de_exp} preserves all the decomposition relationships discussed in the previous section. For the same reason, it is also possible to bypass the computation of the two expectations by directly averaging the effects for the different values of $Y$. Strictly speaking, expectation is defined for numerical variables, as it relies on summation over numerical values. Nevertheless, the same idea can be extended to categorical or ordinal variables by introducing a utility function $T:\mathcal{V}\to\mathbb{R}$. In that case, one simply considers $\mathbb{E}[T(V)]$ in place of $\mathbb{E}[V]$.

\begin{example}\label{ex:student}
{\it In the \emph{Student-Mat} dataset \cite{cortez2008}, let us consider the effect of \emph{gender} on the integer variable (number of) \emph{failures}, with possible values $\mathcal{Y}\coloneq \{0,1,2,3\}$, with \emph{address} acting as a mediator. Tab.~\ref{tab:ternary} depicts the TE and DE for all these four states, together with the expectations computed as in Eq.~\eqref{eq:de_exp}. The results show that the TE and DE are very close across all outcome levels, suggesting that the mediated contribution through \emph{address} is relatively small. The aggregated effect on $Y$ confirms this pattern, with TE and DE taking similar values.}
\end{example}

\begin{table}[htp!]
\centering
\begin{tabular}{lrr}
\toprule
$y$&TE&DE\\
\midrule
0&-0.038&-0.0348\\
1&0.0331&0.0319\\
2&-0.0194&-0.0207\\
3&0.0243&0.0236\\
\midrule
$Y$&0.0672&0.0613\\
\bottomrule
\end{tabular}
\caption{Effects of the four \emph{failure} target states with respect to \emph{gender} and, in the last row, the corresponding expected values.\label{tab:ternary}}
\end{table}

Finally, it is a simple remark that, when coping with a continuous target, we can also consider expectations as in Eq.~\eqref{eq:de_exp}, where sums are replaced by integrals. A similar extension applies to continuous mediators and confounders, in which case the sums in Eqs.~\eqref{eq:te2}, \eqref{eq:de2}, and \eqref{eq:ie2} are likewise replaced by integrals.

Such integrals might rule out relevant behaviours in correspondence of particular values of the target. As an alternative approach, we might discretise the target as a binary variable for different choices of the discretisation bin, thus identify the values of the target maximising an effect, as in the following example.

\begin{example}
{\it For the same dataset as in Ex.~\ref{ex:adult_te_decomp}, let us regard the (working) \emph{hours} per week as a continuous target $Y$, \emph{gender} as a binary protected feature and \emph{occupation} as mediator. Fig.~\ref{fig:cont} shows the values of two effects obtained by binarising for different thresholds. We observe a pronounced effect around the full-time employment cut-off, indicating that individuals in the \emph{female} group are substantially less likely to work more than 40 hours per week compared to \emph{males}. The result indicates that occupational differences reduce, rather than explain, the observed disparity. Overall, these results suggest that disparities in labour supply at the full-time threshold persist even after accounting for occupation.}
\end{example}

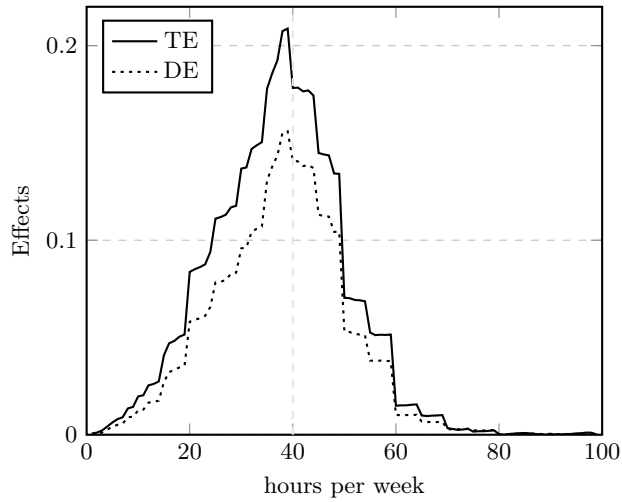
\begin{figure}[htp!]
\centering
\begin{tikzpicture}
\begin{axis}[title={},xlabel={hours per week},ylabel={Effects},xmin=0, xmax=100,ymin=0.0, ymax=0.22,
ytick={0,0.1,0.2},legend pos=north west,ymajorgrids=true,grid style=dashed]
\addplot[color=black,very thick,mark=none]coordinates {(1,0.0006)(2,0.0011)(3,0.0022)(4,0.0042)(5,0.0063)(6,0.008)(7,0.009)(8,0.0135)(9,0.0143)(10,0.0197)(11,0.0203)(12,0.0254)(13,0.0261)(14,0.0274)(15,0.0409)(16,0.047)(17,0.0482)(18,0.0504)(19,0.0515)(20,0.0837)(21,0.0851)(22,0.0862)(23,0.0876)(24,0.094)(25,0.1111)(26,0.1119)(27,0.1131)(28,0.1169)(29,0.1177)(30,0.1368)(31,0.1374)(32,0.1469)(33,0.1488)(34,0.1504)(35,0.1779)(36,0.1858)(37,0.1924)(38,0.2075)(39,0.2088)(40,0.1783)(41,0.1784)(42,0.1765)(43,0.177)(44,0.1744)(45,0.1448)(46,0.1441)(47,0.1436)(48,0.1343)(49,0.1341)(50,0.0704)(51,0.0702)(52,0.0693)(53,0.0691)(54,0.0686)(55,0.0525)(56,0.0512)(57,0.0514)(58,0.0513)(59,0.0515)(60,0.0149)(61,0.0151)(62,0.0151)(63,0.0154)(64,0.0156)(65,0.0099)(66,0.0096)(67,0.0098)(68,0.0099)(69,0.0101)(70,0.0034)(72,0.0025)(73,0.0027)(74,0.003)(75,0.0016)(76,0.0018)(77,0.0018)(78,0.002)(79,0.0023)(80,0.0003)(81,0)(82,0.0004)(84,0.0008)(85,0.0009)(86,0.0007)(87,0.0004)(88,0.0002)(89,0)(90,0.0003)(91,0.0001)(92,0.0002)(94,0.0004)(95,0.0007)(96,0.0008)(97,0.0011)(98,0.001)(99,0)};
\addplot[color=black!100,dotted,very thick,mark=none]coordinates {
(1,0.0004)(2,0.0004)(3,0.0011)(4,0.0026)(5,0.0041)(6,0.0051)(7,0.0057)(8,0.009)(9,0.0092)(10,0.0125)(11,0.0129)(12,0.0165)(13,0.0169)(14,0.0175)(15,0.0271)(16,0.0322)(17,0.0333)(18,0.0346)(19,0.0356)(20,0.0581)(21,0.0591)(22,0.0599)(23,0.0611)(24,0.0657)(25,0.0783)(26,0.0786)(27,0.0796)(28,0.0827)(29,0.0833)(30,0.0958)(31,0.0963)(32,0.1043)(33,0.1059)(34,0.107)(35,0.1307)(36,0.1378)(37,0.1435)(38,0.1553)(39,0.156)(40,0.1409)(41,0.1405)(42,0.1381)(43,0.1386)(44,0.1371)(45,0.1131)(46,0.1124)(47,0.1121)(48,0.1043)(49,0.1043)(50,0.0531)(51,0.0528)(52,0.0518)(53,0.0515)(54,0.0511)(55,0.0381)(56,0.038)(57,0.0381)(58,0.0379)(59,0.0379)(60,0.0101)(61,0.0101)(62,0.0101)(63,0.0102)(64,0.0102)(65,0.0068)(66,0.0065)(67,0.0066)(68,0.0065)(69,0.0067)(70,0.0028)(72,0.0028)(73,0.0029)(74,0.0031)(75,0.0022)(76,0.0023)(77,0.0022)(78,0.0022)(79,0.0024)(80,0.0003)(81,0.0001)(82,0.0001)(84,0.0007)(85,0.0008)(86,0.0007)(87,0.0005)(88,0.0004)(89,0.0003)(90,0.0005)(91,0.0003)(92,0.0003)(94,0.0001)(95,0)(96,0.0002)(97,0.0003)(98,0.0003)(99,0)};
\addplot[black!10,dashed] coordinates{(40,0)(40,0.22)};
\legend{TE,DE};
\end{axis}
\end{tikzpicture}
\caption{Total and direct effects in the Adult dataset with respect to gender for binarised target hours per week as a function of the binarisation threshold.}\label{fig:cont}
\end{figure}

\subsection{Non-Binary Features}
The fairness descriptors in Eqs.~\eqref{eq:tv}--\eqref{eq:ie} are defined for a binary feature $X$ and a specific order between its two states $x_0$ and $x_1$, where we regard $x_0$ as the \emph{protected} state and $x_1$ as the \emph{non-protected} one. When $X$ is not binary, we might have a whole set of protected states denoted as $\mathcal{X}_0$, with the non-protected states  consequently denoted as $\mathcal{X}_1 \coloneq \mathcal{X}\setminus \mathcal{X}_0$. The fairness descriptors definitions can then be extended by averaging over all pairs of states drawn from the two sets. For instance, for the DE in Eq.~\eqref{eq:de}:
\begin{equation}\label{eq:de3}
\mathrm{DE}_{\mathcal{X}_0,\mathcal{X}_1}(Y)\coloneq
\frac{1}{|\mathcal{X}_0| |\mathcal{X}_1|}
\sum_{\substack{x_0 \in \mathcal{X}_0,\\x_1 \in \mathcal{X}_1}}
\left[ P(Y_{x_1,W_{x_0}}=y)-P(Y_{x_0,W_{x_0}}=y) \right]
\,.
\end{equation}
Such an arithmetic average can be replaced by a weighted average taking into account the relative distribution of the protected and non-protected states inside their groups, i.e.,
\begin{equation}\label{eq:de4}
\mathrm{DE}_{\mathcal{X}_0,\mathcal{X}_1}(Y)\coloneq
\sum_{\substack{x_0 \in \mathcal{X}_0,\\x_1 \in \mathcal{X}_1}}
\frac{n(x_0)}{n(\mathcal{X}_0)}
\frac{n(x_1)}{n(\mathcal{X}_1)}
\left[ P(Y_{x_1,W_{x_0}}=y)-P(Y_{x_0,W_{x_0}}=y) \right]
\,,
\end{equation}
where $n(x)$ denote the marginal count of $X=x$. In practice, care should be taken when interpreting this average. If the different terms in the sum have different signs, cancellations may occur, potentially masking important disparities. In such cases, it may be preferable to complement the aggregate measure with a more fine-grained analysis of the individual pairwise effects. Being based on linear operations, averages like the ones in Eq.~\eqref{eq:de3} and Eq.~\eqref{eq:de4} also preserve the decompositions we have discussed in the previous section.

\begin{example}\label{ex:adult_categorical_race_3w^2z}
{\it Let us demonstrate the averages in Eq.~\eqref{eq:de3} and \eqref{eq:de4} by means of an example based on the same dataset of Ex.~\ref{ex:adult_te_decomp}. As a protected feature for the target \emph{income} here we consider the \emph{race}, for which we distinguish between the protected set $\mathcal{X}_0$ with states \emph{African-American} and \emph{Indigenous} and the other non-protected ones, namely \emph{Caucasian}, \emph{Asian-American}, and \emph{Others}. The model has two confounders (\emph{age} and \emph{native country}) and three mediators (\emph{hours}, \emph{occupation} and \emph{relationship}). Tab.~\ref{tab:averages} depicts the pairwise TE and DE for all the six combinations together with the corresponding averages computed as in Eq.~\eqref{eq:de3} and \eqref{eq:de4}. Indeed, in the example, the effects have variable signs, and cancellations occur.}
\end{example}

\begin{table}[htp!]
\centering
\begin{tabular}{p{4cm}p{4cm}rp{0.5cm}r}
\hline
 $x_0$&$x_1$&TE&&DE\\
\hline
 Caucasian&African-American&-0.040&&-0.070\\
 Caucasian&Indigenous&-0.126&&-0.126\\
 Asian-American&African-American&0.092&&0.030\\
 Asian-American&Indigenous&0.005&&-0.004\\
 Others&African-American&0.095&&0.027\\
 Others&Indigenous&0.008&&-0.002\\
\hline
$\mathcal{X}_0$&$\mathcal{X}_1$&0.006&&-0.024\\
$\mathcal{X}_0$&$\mathcal{X}_1$&0.007&&-0.011\\
\hline
\end{tabular}
\caption{Pairwise effects on the \emph{income} in the \emph{Adult} dataset. The last line displays averages computed, respectively, as in Eq.~\eqref{eq:de3} and Eq.~\eqref{eq:de4}.\label{tab:averages}}
\end{table}

As a special case of non-binary protected features, let us consider the ordinal case. Say, for instance, that the states are ordered as $\mathcal{X} \coloneq (x_0,\ldots,x_{n})$. A \emph{threshold} state $x_t$, $0<t<n$, might induce a partition between protected and non-protected states, say $\mathcal{X}_0 \coloneq \{x_0,\ldots,x_t\}$. The fairness descriptors are consequently computed as in Eq.~\eqref{eq:de3}. Such averages might discard information on how effects evolve along the ordered scale of the states of $X$. For a deeper analysis we suggest instead a \emph{stepwise} decomposition:
\begin{equation}\label{eq:stepwise}
\mathrm{TE}_{x_0,x_{n-1}}(y)=\sum_{k=0}^{n-1}
\mathrm{TE}_{x_k,x_{k+1}}(y)\,.
\end{equation}
The above decomposition trivially holds for the TE defined as in Eq.~\eqref{eq:te}, as well as for TV and SE. For DE and IE, a similar decomposition only holds in the linear case. The following example is intended to show how the stepwise decomposition might reveal useful information when coping with ordinal protected features.

\begin{example}
{\it As in Ex.~\ref{ex:adult_te_decomp}, we consider the \emph{Adult} dataset, where the Boolean target variable $Y$ indicates whether an individual’s annual income exceeds \$50{,}000. As the protected feature $X$, we consider \emph{education}, whose levels exhibit a natural ordinal structure ranging from high-school to master’s degree. The TE decomposition in Eq.~\eqref{eq:stepwise}, illustrated for these data in Fig.~\ref{fig:stepwise}, shows, for instance, that the substantial decrease in the probability of earning a high income when comparing high-school dropouts with people holding a master remains relevant even when the comparison is moved to higher levels, such as junior college dropouts.}
\end{example}

This example demonstrates the value of stepwise causal decomposition in identifying where along an ordered social process disparities are most visible, complementing traditional binary fairness analysis.

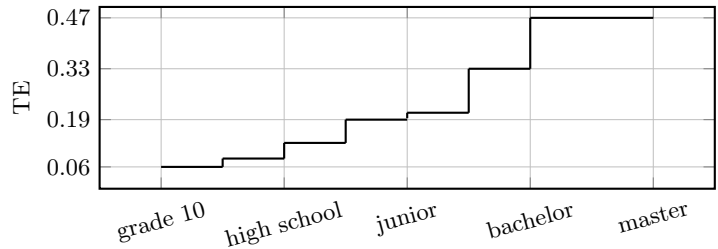
\begin{figure}[htp!]
\centering
\begin{tikzpicture}[]
\begin{axis}[
height=4cm,
width=0.8\textwidth,
grid=both,
grid style={line width=.1pt, draw=gray!50},
xlabel={},          
ylabel={TE},
xmin=15,xmax=25,
ymin=0,ymax=0.5,
ytick={0.06,0.19,0.33,0.47},
yticklabels={0.06,0.19,0.33,0.47},
xtick={16,18,20,22,24},
xticklabel style = {rotate = 15},
xticklabels={grade 10,high school,junior,bachelor,master}]
\addplot[mark=none,domain=16:17]{0.06};
\addplot[mark=none,domain=17:18]{0.083};
\addplot[mark=none,domain=18:19]{0.126};
\addplot[mark=none,domain=19:20]{0.19};
\addplot[mark=none,domain=20:21]{0.209};
\addplot[mark=none,domain=21:22]{0.33};
\addplot[mark=none,domain=22:24]{0.47};
\addplot +[solid,black,mark=none] coordinates {(17, 0.06) (17, 0.083)};
\addplot +[solid,black,mark=none] coordinates {(18, 0.083) (18, 0.126)};
\addplot +[solid,black,mark=none] coordinates {(19, 0.126) (19, 0.19)};
\addplot +[solid,black,mark=none] coordinates {(20, 0.194) (20, 0.209)};
\addplot +[solid,black,mark=none] coordinates {(21, 0.209) (21, 0.33)};
\addplot +[solid,black,mark=none] coordinates {(22, 0.33) (22, 0.47)};
\end{axis}
\end{tikzpicture}
\caption{Income TE for increasing educational levels in the \emph{Adult} dataset.}\label{fig:stepwise}
\end{figure}

\section{Fairness Analysis and LLM Reports}\label{sec:app-ui-llm}
As shown by the examples discussed in this paper, the fairness descriptors and the decomposition relations we derived in Sects.~\ref{sec:sfm} and \ref{sec:theory_new} can be very effective in highlighting discriminatory mechanisms underlying the input data. However, a joint interpretation of the results can be challenging, especially when different effects have opposite signs and cancellations occur. Data as the ones in Tab.~\ref{tab:adult} may not be immediately interpretable for a non-specialist reader. A Sankey diagram as in Fig.~\ref{fig:adult_sankey_decomp_ex1} might offer an appealing summary, but this advantage holds only when all effects share the same sign.

These interpretability challenges suggest the possibility of leveraging LLMs as reporting agents. In our framework, the identification formulae are first used to compute the fairness descriptors. The software implementation of this part is a library, we called \textsc{FairMind}. The resulting structured outputs are then provided to the LLM, which acts as a post-processing interpretability layer and generates a coherent, human-readable analytical report. Alternatively, the LLM may generate reports directly from the input data, provided that appropriate contextual information is included in the prompt.

\begin{figure}[htp!]
\centering
\begin{tikzpicture}[]
\tikzstyle{block2} = [draw,text centered,rounded corners]
\tikzstyle{block} = [rectangle,draw,text width=17em,rounded corners,minimum height=17em,anchor=north]
\node [block2] at (0,1) (DB) {Dataset on $X$, $Y$, $Z$, $W$};
\node [block2] at (6,1) (Effects) {TV,TE,SE,DE,IE};
\node [block] at (0,0) (Report2) 
{\scriptsize {\bf Report \# 1 (Pure LLM)}\\Male students show a higher probability than females of pass ({\color{red}{difference +0.0665}}). The decomposition indicates this gap is primarily driven by the direct effect of gender on pass, while {\color{red}{the mediator study time contributes negligibly and slightly negatively (indirect)}}, and there is no spurious/confounding contribution in the provided data. In substantive terms, the observed advantage for male over female in passing in this dataset is mainly attributable to direct pathways rather than study time or measured confounding.};
\node [block] at (6,0) (Report) {\scriptsize {\bf Report \# 2 (\textsc{FairMind} + LLM)}\\Males have a higher probability of pass than females by approximately {\color{green!50!black}{$0.0626$}}. The disparity is primarily driven by the direct effect ($\text{DE} = 0.1119$), with a smaller mediated contribution through study time ($\text{IE} = 0.0492$); spurious/confounding effects are zero. Thus, most of the advantage in passing the exam for males over females appears to operate via direct pathways rather than via differences in study time, {\color{green!50!black}{although study time accounts for a non-negligible portion}} of the gap.};
\path [draw,->] (DB) -- (Effects) node [pos=0.5, above, sloped] {\textsc{FairMind}};
\path [draw,->] (Effects) -- (Report) node [pos=0.5, left] {LLM};
\path [draw,->] (DB) -- (Report2) node [pos=0.5, right] {LLM};
\end{tikzpicture}
\caption{Comparing a report directly generated from the data by LLM against the one obtained by first processing the data with the \textsc{FairMind} tools. The protected binary feature is \emph{gender} (\emph{male} and \emph{female}), while the target is the \emph{passing} of the exam. For the sake of simplicity, no confounders are considered, and the mediator is the \emph{study time} with four ordinal states.}
\label{fig:reports}
\end{figure}
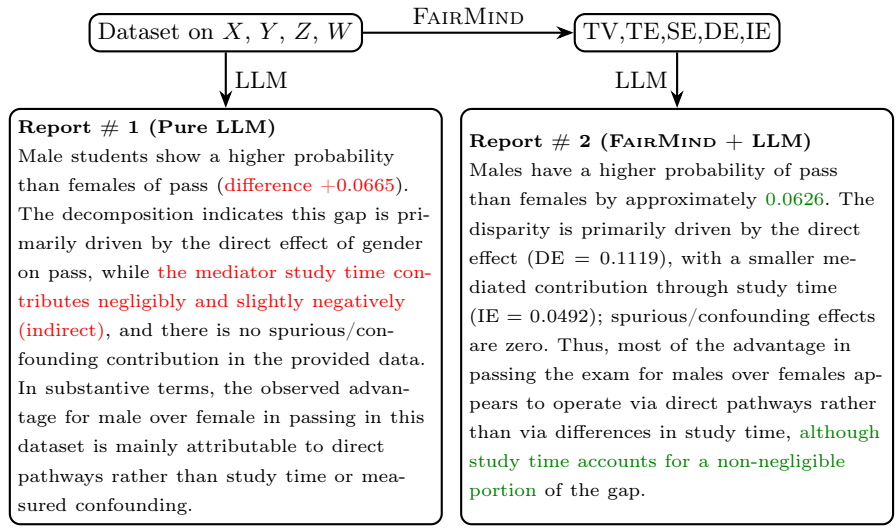

\paragraph{\bf Limits of the Pure-LLM Approach.} 
Fig.~\ref{fig:reports} shows a comparison between our approach and the \emph{pure-LLM} alternative for the \emph{Student-Mat} dataset considered also in Ex.~\ref{ex:student}. The differences in the conclusions of the two reports are highlighted with colours. As widely discussed in the literature \cite{song2025survey}, the reasoning abilities of LLMs remain limited, which may lead to incorrect conclusions---for instance, the statement regarding the contribution of the mediator in the pure-LLM report. Similarly, the pure-LLM approach may result in inaccurate computations (e.g., the incorrect TE calculation). Beyond these critical issues, it is also important to note that asking an LLM to directly process raw data is highly inefficient. In our example, we were able to process only a small dataset (395 instances), whereas applying the same approach to larger datasets (thousands of instances or more) proved infeasible. Even with this small dataset, the computational advantages of our method are evident: our approach required 5,673 tokens, while the pure-LLM approach required 15,150 tokens. Finally, it is worth noting that datasets potentially affected by fairness issues are also likely to raise \emph{privacy} concerns. Asking an LLM to process raw data may therefore pose additional privacy risks. In contrast, \textsc{FairMind} can perform the computations within a secure environment and then provide the LLM only with the numerical outputs of the analysis, which is considerably less critical from a privacy perspective. For these reasons, we regard \textsc{FairMind} as a novel and robust AutoML tool for fairness analysis, designed to facilitate the adoption of fairness assessment among ML practitioners.

\paragraph{\bf Code and Benchmarks.}
Our \textsc{FairMind} framework is based on a Python implementation of all the identification and decomposition formulas discussed in this paper combined with preprocessing utilities and a user interface. We adhere to the SFM: the user only needs to provide, in addition to the data, a specification of the target variable, the protected feature, and the mediators and confounders. The input data are treated as endogenous observations. These data are used to train a Bayesian network over the specified variables, with a topology determined by the SFM assumptions and with directed arcs from the confounders to the protected attribute. For parameter estimation, we employ Laplace smoothing with pseudo-count $\alpha=1$ across all states of all variables. The resulting Bayesian network is then used to compute the observational queries required by our identification formulae. The \texttt{pgmpy} \cite{ankanpgmpy2024} library is used for Bayesian network inference.


\paragraph{\bf Prompt.} 
 The outputs produced by \textsc{FairMind} serve as the input to an LLM reporting module. To generate structured fairness reports, we adopt a dual-prompt architecture composed of a system prompt and a user prompt. The system prompt defines the role of the model as a statistician and causal fairness expert, while simultaneously imposing strict output constraints, formatting rules and structural requirements. The user prompt contains only the structured JSON object with pre-computed fairness decomposition results. This separation ensures that statistical computation remains fully deterministic and external to the LLM, which is instead exclusively responsible for semantic interpretation and narrative synthesis. We use \emph{gpt-05-mini} as the reporting model, balancing interpretative quality with computational efficiency and set the reasoning effort to \emph{high}. The design of the system prompt, also attached in the supplementary material, stimulates structured reasoning while avoiding unconstrained free-form generation.
 
The prompt (see App.~\ref{app:prompt}) enforces a predefined analytical structure. It prescribes the exact organization of the report, the order in which the effects must be described, how effect decompositions must be interpreted and how extreme values across subgroups must be reported. This aligns with the \emph{Chain of Thought} paradigm \cite{wangUnderstandingChainofThoughtPrompting2023}, whose effectiveness relies primarily on maintaining relevance to the query and a coherent ordering of reasoning steps, rather than on the strict logical validity of intermediate demonstrations. Accordingly, our prompt architecture enforces structural coherence and strict grounding in the provided JSON data. By constraining the model to use only the supplied variables and numerical values, we mitigate hallucination risks and ensure that the generated report remains semantically anchored to the computed fairness results.

\section{Conclusions}\label{sec:conc}
The present work should be regarded as one of the first attempts to promote the adoption of fairness analysis among machine learning and data science practitioners. We adopt an AutoML perspective and leverage the identifiability results derived by Plečko and Bareinboim \cite{pleckoCausalFairnessAnalysis2024} for their standard fairness model. This enables the development of a software tool capable of performing such analyses with minimal human intervention. Finally, the use of LLMs as reporting agents in the last stage of the pipeline allows the generation of outputs that are easily interpretable even by non-specialists.

Our approach can also be interpreted from a \emph{neuro-symbolic} perspective, as a successful example of integration and cooperation between LLM-based (neural) agents and high-level (symbolic) tools responsible for preliminary data analysis and the required evaluations. Our discussion of the advantages of this approach, compared with the direct use of an LLM in a purely end-to-end manner, can be readily extended to many other tasks.

Several directions should be considered for future work. First, the tool could incorporate structural learning algorithms in order to automatically distinguish between mediators and confounders while also removing irrelevant features. In addition, the LLM component could be further improved, for instance by adopting prompt optimisation techniques. Further challenging directions include relaxing some of the assumptions of the SFM framework, which may lead to partially identifiable queries, as well as developing automated methods to mitigate fairness issues detected by the tool.
\newpage
\bibliographystyle{splncs04}
\bibliography{biblio}

@article{surveyAutoMLLLMs,
    author = {Gu, Yang and You, Hengyu and Cao, Jian and Yu, Muran and Fan, Haoran and Qian, Shiyou},
    title = {Large Language Models for Constructing and Optimizing Machine Learning Workflows: A Survey},
    year = {2025},
    publisher = {Association for Computing Machinery},
    journal = {ACM Trans. Softw. Eng. Methodol.}
}

@article{barbudo2023,
	author = {Barbudo, Rafael and Ventura, Sebasti{\'a}n and Romero, Jos{\'e}Ra{\'u}l},
	journal = {Knowledge and Information Systems},
	number = {12},
	pages = {5097--5149},
	title = {Eight years of {{AutoML}}: categorisation, review and trends},
	volume = {65},
	year = {2023}}

@inproceedings{song2025survey,
  title={A survey on large language model reasoning failures},
  author={Song, Peiyang and Han, Pengrui and Goodman, Noah},
  booktitle={2nd AI for Math Workshop (ICML 2025)},
  year={2025}}

@inproceedings{cortez2008,
  author={Cortez, P. and Silva, Alice},
  title={Using data mining to predict secondary school student performance},
    booktitle = "Proceedings of 5th Annual Future Business Technology Conference" ,
  year={2008}}

@book{pearlCausalityModelsReasoning2009,
  title = {Causality: {{Models}}, {{Reasoning}}, and {{Inference}}},
  shorttitle = {Causality},
  author = {Pearl, Judea},
  year = 2009,
  publisher = {Cambridge University Press}}

@book{pearlCausalInferenceStatistics2016,
  title = {Causal Inference in Statistics: A Primer},
  shorttitle = {Causal Inference in Statistics},
  author = {Pearl, Judea and Glymour, Madelyn and Jewell, Nicholas P.},
  year = 2016,
  publisher = {Wiley},
}

@article{pleckoCausalFairnessAnalysis2024,
  title = {Causal {{Fairness Analysis}}: {{A Causal Toolkit}} for {{Fair Machine Learning}}},
  shorttitle = {Causal {{Fairness Analysis}}},
  author = {Ple{\v c}ko, Drago and Bareinboim, Elias},
  year = 2024,
  journal = {Foundations and Trends in Machine Learning},
  volume = {17},
  number = {3},
  pages = {304--589},
}

@inproceedings{mahajanCausalFairnessInActionOpenSource2025,
  title = {{{CausalFairnessInAction}}: {{An}} Open Source {P}ython Library for Causal Fairness Analysis},
  shorttitle = {{{CausalFairnessInAction}}},
  booktitle = {Proceedings of the {{NeurIPS}} 2025 {{Workshop}} on {{Causal Fairness}}},
  author = {Mahajan, Kriti},
  year = 2025
}

@article{weertsCanFairnessBe2024,
  title = {Can {{Fairness}} Be {{Automated}}? {{Guidelines}} and {{Opportunities}} for {{Fairness-aware AutoML}}},
  shorttitle = {Can {{Fairness}} Be {{Automated}}?},
  author = {Weerts, Hilde and Pfisterer, Florian and Feurer, Matthias and Eggensperger, Katharina and Bergman, Edward and Awad, Noor and Vanschoren, Joaquin and Pechenizkiy, Mykola and Bischl, Bernd and Hutter, Frank},
  year = 2024,
  journal = {Journal of Artificial Intelligence Research},
  volume = {79},
  pages = {639--677},
}

@inproceedings{dingRetiringAdultNew2021,
  title = {Retiring Adult: New Datasets for Fair Machine Learning},
  shorttitle = {Retiring Adult},
  booktitle = {Proceedings of the 35th {{International Conference}} on {{Neural Information Processing Systems}}},
  author = {Ding, Frances and Hardt, Moritz and Miller, John and Schmidt, Ludwig},
  year = 2021,
  pages = {6478--6490},
  publisher = {Curran},
}

@inproceedings{wangUnderstandingChainofThoughtPrompting2023,
  title = {Towards {{Understanding Chain-of-Thought Prompting}}: {{An Empirical Study}} of {{What Matters}}},
  shorttitle = {Towards {{Understanding Chain-of-Thought Prompting}}},
  booktitle = {Proceedings of the 61st {{Annual Meeting}} of the {{Association}} for {{Computational Linguistics}} ({{Volume}} 1: {{Long Papers}})},
  author = {Wang, Boshi and Min, Sewon and Deng, Xiang and Shen, Jiaming and Wu, You and Zettlemoyer, Luke and Sun, Huan},
  editor = {Rogers, Anna and {Boyd-Graber}, Jordan and Okazaki, Naoaki},
  year = 2023,
  pages = {2717--2739},
  publisher = {Association for Computational Linguistics},
}

@article{gallesAxiomaticCharacterizationCausal1998,
  title = {An {{Axiomatic Characterization}} of {{Causal Counterfactuals}}},
  author = {Galles, David and Pearl, Judea},
  year = 1998,
  journal = {Foundations of Science},
  volume = {3},
  number = {1},
  pages = {151--182},
}

@inproceedings{correaCounterfactualGraphicalModels2025,
  title = 	 {Counterfactual Graphical Models: Constraints and Inference},
  author =       {Correa, Juan D. and Bareinboim, Elias},
  booktitle = 	 {Proceedings of the 42nd International Conference on Machine Learning},
  pages = 	 {11245--11254},
  year = 	 {2025},
  volume = 	 {267},
  publisher =    {PMLR}}

@inproceedings{pearlDirectIndirectEffects2001,
  title = {Direct and Indirect Effects},
  booktitle = {Proceedings of the {{Seventeenth}} Conference on {{Uncertainty}} in Artificial Intelligence},
  author = {Pearl, Judea},
  year = 2001,
  pages = {411--420},
  publisher = {Morgan Kaufmann},
  }

@article{ankanpgmpy2024,
  author  = {Ankur Ankan and Johannes Textor},
  title   = {{P}gmpy: A {P}ython Toolkit for {B}ayesian Networks},
  journal = {Journal of Machine Learning Research},
  year    = {2024},
  volume  = {25},
  number  = {265},
  pages   = {1--8},
}

@inproceedings{correaNestedCounterfactualIdentification2021,
author = {Correa, Juan D. and Lee, Sanghack and Bareinboim, Elias},
title = {Nested counterfactual identification from arbitrary surrogate experiments},
year = {2021},
isbn = {9781713845393},
publisher = {Curran Associates Inc.},
address = {Red Hook, NY, USA},
booktitle = {Proceedings of the 35th International Conference on Neural Information Processing Systems},
articleno = {525},
numpages = {12},
series = {NIPS '21}
}

\newpage
\appendix
\section{Proofs}\label{app:proofs} 
The formulae in Prop.~\ref{prop:identification} were previously derived by \cite{pleckoCausalFairnessAnalysis2024}. We provide here alternative derivations, partially based on the recently proposed \emph{ctf-calculus} \cite{correaCounterfactualGraphicalModels2025}.
\noindent{\bf Proof of Proposition~\ref{prop:identification}}.
{\it We first prove Eq.~\eqref{eq:te2}. By total probability:
\begin{equation}
P(Y_{x} = y) = \sum_{z} P(Y_x=y \mid z) P(z)\,.
\end{equation}
Then, as $Y_x \dsep X \mid Z$, we obtain:
\begin{equation}
P(Y_{x} = y)= \sum_{z} P(Y_x=y \mid x,z) P(z)\,,
\end{equation}
and finally by the consistency axiom \cite{gallesAxiomaticCharacterizationCausal1998}:
\begin{equation}
P(Y_{x} = y) =  \sum_{z} P(Y= y \mid x,z) P(z) \,.
\end{equation} 
As we assume that the SCM generates the data, $P(\cdot)=\hat{P}(\cdot)$ and hence, from Eq.~\eqref{eq:te}, Eq.~\eqref{eq:te2}. We similarly proceed for Eq.~\eqref{eq:de2}. First, by total probability:
\begin{equation}
P(Y_{x_1,W_{x_0}}=y)= \sum_{z} P(Y_{x_1,W_{x_0}}=y \mid z) P(z)\,,
\end{equation}
and, again by total probability:
\begin{equation}
P(Y_{x_1,W_{x_0}}=y)=\sum_{z,w} P(Y_{x_1,W_{x_0}}=y \mid z, W_{x_0}=w) P(W_{x_0}=w \mid z) P(z)\,.
\end{equation}
The next steps are based on the first two rules of the \emph{ctf-calculus} \cite{correaCounterfactualGraphicalModels2025}. The first rule implies $W_{x_0}=w \Rightarrow Y_{x_1,W_{x_0}} = Y_{x_1,w}$ and hence:
\begin{equation}
P(Y_{x_1,W_{x_0}}=y)=\sum_{z,w} P(Y_{x_1,w}=y \mid z, W_{x_0}=w) P(W_{x_0}=w \mid z) P(z) \,,
\end{equation}
then by the second rule we have $Y_{x_1,w} \dsep W_{x_0} \mid Z$, and hence:
\begin{equation}
P(Y_{x_1,W_{x_0}}=y)=\sum_{z,w} P(Y_{x_1,w}=y \mid z) P(W_{x_0}=w \mid z) P(z)\,.
\end{equation}
We similarly proceed for the next steps:
\begin{align} 
P(Y_{x_1,W_{x_0}}=y)
    =&& \sum_{z,w} P(Y_{x_1,w}=y \mid x_1, w, z) P(W_{x_0}=w \mid z) P(z)\,,\\
    =&& \sum_{z,w} P(Y=y \mid x_1, w, z) P(W_{x_0}=w \mid z) P(z)\,,\\
    =&& \sum_{z,w} P(Y=y \mid x_1, w, z) P(W=w \mid x_0, z) P(z)\,,
\end{align}
and hence Eq.~\eqref{eq:de2}. Eq.~\eqref{eq:ie2} is derived analogously. \qed}

Again for the sake of completeness, let us detail also the derivation of the decomposition in Prop.~\ref{prop:total_Eff_decomp}.

\noindent{\bf Proof of Proposition \ref{prop:total_Eff_decomp}.}
{\it By the \emph{composition axiom} \cite{gallesAxiomaticCharacterizationCausal1998}, 
the TE defined as in Eq.~\eqref{eq:te} rewrites as:
\begin{equation}
    \mathrm{TE}_{x_0,x_1}(y) = P(Y_{x_1, W_{x_1}}=y) - P(Y_{x_0, W_{x_0}}=y)\,.
\end{equation}
By introducing the zero-sum term $P(Y_{x_1,W_{x_0}}=y)$, we have that $\mathrm{TE}_{x_0,x_1}(y)$ rewrites as: 
\begin{equation}
P(Y_{x_1, W_{x_1}}=y) - P(Y_{x_0, W_{x_0}}=y) + P(Y_{x_1, W_{x_0}}=y) - P(Y_{x_1, W_{x_0}}=y)\,,
\end{equation}
i.e., Eq.~\eqref{eq:dec_te}. In linear SCMs, IE is antisymmetric and this implies Eq.~\eqref{eq:dec_te2}.}\qed

\noindent{\bf Proof of Theorem~\ref{thm:multiple_mediators}.}
{\it Without loss of generality, consider two mediators only, ordered as $W^1, W^2$. We can write the contribution of $W^1$ for the IE by providing an identification formula following from the proof of Prop.~\ref{prop:ie} and the conditional independence between the mediators (i.e., $W^1 \dsep W^2 \mid X, Z$) as follows:
\begin{align}
&P\big(y_{x_0,(W^1)_{x_1},(W^2)_{x_0}}\big) 
- P\big(y_{x_0,(W^1)_{x_0},(W^2)_{x_0}}\big)
 \nonumber \\ 
&= \sum_{z} P(z) \sum_{w^1,w^2} P(y \mid x_0, w^1, w^2, z) \Big(P(w^1 \mid x_1, z) - P(w^1 \mid x_0, z)\Big) P(w^2 \mid x_0, z)\,.
\label{eq:order1}
\end{align}
Because of the linearity assumption, we have the following decomposition:
\begin{equation}\label{eq:decomp}
P(y \mid x_0, w^1, w^2, z)=\phi_1(w^1,z)+\phi_2(w^2,z)\,,
\end{equation}
where $\phi_1$ and $\phi_2$ are linear functions describing the contribution
of $W^1$ and $W^2$, respectively. Eq.~\eqref{eq:decomp} allows to rewrite the right-hand side of Eq.~\eqref{eq:order1} as:
\begin{equation}\label{eq:aa}
\sum_z P(z)\sum_{w^1,w^2}
\big(\phi_1(w^1,z)+\phi_2(w^2,z)\big)
\big(P(w^1|x_1,z)-P(w^1|x_0,z)\big)
P(w^2\mid x_0,z)\,.
\end{equation}
By simple algebra, we rewrite Eq.~\eqref{eq:aa} as:
\begin{align}
&\sum_z P(z)\Big(
\sum_{w^1}\phi_1(w^1,z)\big(P(w^1\mid x_1,z)-P(w^1\mid x_0,z)\big)
\sum_{w^2}P(w^2\mid x_0,z) \nonumber\\
&\quad+\sum_{w^2}\phi_2(w^2,z)P(w^2\mid x_0,z)
\sum_{w^1}\big(P(w^1\mid x_1,z)-P(w^1\mid x_0,z)\big)
\Big)\,.
\label{eq:order1mid}
\end{align}
Finally, as $\sum_{w^2}P(w^2\mid x_0,z)=1$ and
$\sum_{w^1}(P(w^1\mid x_1,z)-P(w^1\mid x_0,z))=0$, we obtain:
\begin{align}
\sum_z P(z)\sum_{w^1}\phi_1(w^1,z)\big(P(w^1\mid x_1,z)-P(w^1\mid x_0,z)\big)\,.
\label{eq:order1final}
\end{align}
Thus, given the $(W^1, W^2)$, the contribution reduces strictly to an expression depending on $W^1$ only. With the inverse order the expression still depends on $W^1$ only. Hence, the individual contribution of each mediator is isolated from the others and does not depend on the ordering. The same argument extends straightforwardly to an arbitrary number of mediators.}\qed

The decomposition in Prop.~\ref{prop:ie} can be seen as an instance of Th. 6.6 in  \cite{pleckoCausalFairnessAnalysis2024}. Yet, we report here a self-contained derivation for the sake of completeness.

\noindent{\bf Proof of Proposition~\ref{prop:ie}.}
{\it
Let $A_i \coloneq P\big(y_{x_0,(\bm{W}^{\leq i})_{x_1},(\bm{W}^{>i})_{x_0}}\big)$ denote the $i$-th term appearing in Eq.~\eqref{eq:iew}. The summation in Eq.~\eqref{eq:multipleeffW} forms a telescoping sum, i.e.:
\begin{equation}
    \sum_{i=1}^k A_i - A_{i-1} = A_k - A_0\,.
\end{equation}
For $i=k$, $\bm{W}^{\leq k} = \bm{W}$ and $\bm{W}^{>k} = \emptyset$, thus $A_k = P(y_{x_0,\bm{W}_{x_1}})$. For $i=0$, $\bm{W}^{\leq 0} = \emptyset$ and $\bm{W}^{>0} = \bm{W}$, thus $A_0 = P(y_{x_0,\bm{W}_{x_0}})$. Therefore, $A_k - A_0 = P(y_{x_0,\bm{W}_{x_1}}) - P(y_{x_0, \bm{W}_{x_0}})$, which recover the indirect effect definition in Eq.~\eqref{eq:ie}, concluding the decomposition.

We are left to identify the individual term $A_i$. We proceed to apply the \emph{Counterfactual Unnesting Theorem} \cite{correaNestedCounterfactualIdentification2021} and total probability over $\bm{Z}$, thus,
\begin{equation}
    A_i = \sum_{\bm z, \bm w} P\big(y_{x_0, \bm{w}}, \bm{W}^{\leq i}_{x_1} = \bm{w}^{\leq i}, \bm{W}^{> i}_{x_0} = \bm{w}^{> i} \mid \bm{z} \big) P(\bm{z}) \label{eq:proof_decomp_ie_1}\,.
\end{equation}
The subsequent steps follow from the \emph{ctf-calculus} \cite{correaCounterfactualGraphicalModels2025}. Throughout the proof, we construct the Ancestral Multi-World Network $\mathcal{G}_A$---where the input causal graph $\mathcal{G}$ is the SFM in Fig.\ref{fig:causalgraph_detailed} generalised to $k$ ordered mediators---which encodes the counterfactual independences necessary to apply Rule 2.

Consequently, we observe that $Y_{x_0,\bm{w}} \dsep \{\bm{W}^{\leq i}_{x_1},\bm{W}^{>i}_{x_0}\} \mid \bm{Z}$ in $\mathcal{G}_A$, which, in turn, enables us to apply the Rule 2: 
\begin{equation}
\label{eq:proof_decomp_ie_2} 
    A_i = \sum_{\bm z, \bm w} P\big(y_{x_0,\bm{w}} \mid \bm{z} \big) P\big(\bm{W}^{\leq i}_{x_1} = \bm{w}^{\leq i}, \bm{W}^{> i}_{x_0} = \bm{w}^{> i} \mid \bm{z}\big) P(\bm{z}) \,.
\end{equation}
For the first factor of Eq.~\eqref{eq:proof_decomp_ie_2}, we apply, respectively, Rule 2 because $Y_{x_0,\bm{w}} \dsep \{X, \bm{W} \}\mid \bm{Z}$ in $\mathcal{G}_A$ and Rule 1, i.e. $(X=x_0,\bm{W}=\bm{w}) \Rightarrow Y_{x_0, \bm{w}}=Y$, this yields:
\begin{equation}
   P\big(y_{x_0,\bm{w}} \mid \bm{z} \big) = P\big(y_{x_0, \bm{w}} \mid x_0, \bm{w}, \bm{z} \big) = P\big(y \mid x_0, \bm{w}, \bm{z} \big) \label{eq:proof_decomp_ie_first_term_obs}\,. 
\end{equation}
Let us now focus on the second factor of Eq.~\eqref{eq:proof_decomp_ie_2}. Since the mediators are evaluated in a fixed topological order, for $j\in \{1, \dots, k\}$, any mediator $W^j$ only depends on its topological predecessors $\bm{W}^{<j}$ (in addition to $X, \bm{Z}$). Let us define the $j$-th intervention as:
\begin{equation}
    \label{eq:proof_decomp_ie_jth_int}
    x^{(j)} \coloneq 
    \begin{cases}
        x_1 \quad j\leq i\,,\\
        x_0 \quad j > i \,.
    \end{cases}
\end{equation}
Note that second factor of Eq.~\eqref{eq:proof_decomp_ie_2} can be expressed using its set elements as $P(W^1_{x^{(1)}} = w^1, \dots, W^k_{x^{(k)}} = w^k \mid \bm{z})$. Using the chain rule of probability and Eq.~\eqref{eq:proof_decomp_ie_jth_int}, we obtain the factorisation:
\begin{align}
    P\big(W^1_{x^{(1)}} &= w^1, \dots, W^k_{x^{(k)}} = w^k \mid \bm{z}\big) \nonumber\\
       &= \prod_{j=1}^k P\big(W^j_{x^{(j)}}=w^j\mid \{W^m_{x^{(m)}} = w^m\}_{0<m<j}, \bm{z}\big) \label{eq:proof_decomp_ie_3}\,.
\end{align}
For each individual factor $j$ in the r.h.s. of Eq.~\eqref{eq:proof_decomp_ie_3}, we obtain:
\begin{align}
    P\big(W^j_{x^{(j)}} &= w^j \mid \{W^m_{x^{(m)}} = w^m\}_{0<m<j}, \bm{z} \big) \nonumber\\
    &= P\big(W^j_{x^{(j)}} = w^j \mid x^{(j)}, \bm{W}^{<j}=\bm{w}^{<j}, \{W^m_{x^{(m)}} = w^m\}_{0<m<j}, \bm{z}\big) \label{eq:proof_decom_ie_4}\\
    &= P\big(W^j_{x^{(j)}} = w^j \mid x^{(j)}, \bm{W}^{<j}=\bm{w}^{<j}, \bm{z}\big) \label{eq:proof_decom_ie_5}\\
    &= P\big(w^j \mid x^{(j)}, \bm{w}^{<j}, \bm{z}\big) \label{eq:proof_decom_ie_6} \,.
\end{align}
The derivation goes as follows. First, in Eq.~\eqref{eq:proof_decom_ie_4}, we apply Rule 2 because $W^j_{x^{(j)}} \dsep \{X, \bm{W}^{<j}\}$ when conditioning on $\{W^1_{x^{(1)}}, \dots, W^{j-1}_{x^{(j-1)}}, \bm{Z}\}$ in $\mathcal{G}_A$, which, in turn, allow us to introduce the factual evidence. Next, in Eq.~\eqref{eq:proof_decom_ie_5}, we note that once the factual precedessors $\bm{W}^{<j}$ and the treatment $X$ are explicitly observed, the only non-instantiated parameter of structural equation associated to $W^j$ is the exogenous variable $U_{W^{j}}$, making the set $\{W^m_{x^{(m)}} = w^m\}_{m<j}$ redundant. Finally, in Eq.~\eqref{eq:proof_decom_ie_6} we apply Rule 1, i.e., $X=x^{(j)} \Rightarrow W^j_{x^{(j)}} = W^j$.

Consequently, by Eq.~\eqref{eq:proof_decomp_ie_jth_int}, we can rewrite the term from Eq.~\eqref{eq:proof_decomp_ie_3} as a factorisation of observable quantities, i.e., 
\begin{align}
    P\big(W^1_{x^{(1)}} = w^1, &\dots, W^k_{x^{(k)}}=w^k \mid \bm{z}\big) \nonumber \\
    & = \prod_{j=1}^k P\big(w^j \mid x^{(j)}, \bm{w}^{<j}, \bm{z}\big) \\
    &= \prod_{j=1}^{i} P\big(w^j \mid x_1, \bm{w}^{<j}, \bm{z}\big)\prod_{j=i+1}^k P\big(w^j \mid x_0, \bm{w}^{<j}, \bm{z}\big) \label{eq:proof_decomp_ie_first_scnd_obs}\,.
\end{align}
We substitute back into Eq.~\eqref{eq:proof_decomp_ie_2} the observational expressions obtained in Eqs.~\eqref{eq:proof_decomp_ie_first_term_obs} and~\eqref{eq:proof_decomp_ie_first_scnd_obs}. Provided that the observational data is generated by an SCM compatible with the SFM, we substitute $P$ with $\hat{P}$. By marginalising out $\bm{z}$ and observing that $\bm{w}^{<j} = \bm{w}^{\leq j-1}$, we recover Eq.~\eqref{eq:id_sfm_multimediator}; concluding the proof.
\qed
}

\vspace{2ex}
We refer the reader to \cite{pleckoCausalFairnessAnalysis2024} for proof of Prop.~\ref{prop:spur_more}.

\newpage
\section{LLM Instructions Used for Fairness Report Generation}\label{app:prompt}
\begin{tcolorbox}[title=Prompt, colback=gray!5,breakable]
\begin{lstlisting}[basicstyle=\scriptsize]
## SYSTEM PROMPT (Markdown, token-efficient)

You are a statistician and causal fairness expert. You will receive fairness decomposition results as JSON.

## Output format (MUST follow exactly)
Return **one** response containing **exactly two** top-level sections in this order and with these exact labels:

1) `TEXT:`
2) `LATEX:`

- `TEXT` must be plain English with **no LaTeX**.
- `LATEX` must be a **complete standalone LaTeX document** starting with `\documentclass` and containing **only valid LaTeX** (no Markdown, no backticks, no commentary).

Use only information present in the JSON. Do **not** invent numbers or fields. If something required is missing, explicitly say it was not provided.

## Required document structure (applies to BOTH TEXT and LATEX)
### TEXT structure
- Start with: `Title: "Fairness Decomposition Report"`
- Then include these subsections as headings using the **exact names** below:
  1. `Overview of the Fairness Analysis`
  2. `Decomposition of Effects`
  3. `Stepwise Effects Across Ordered Levels of X` (conditional; see below)
- End with a conclusion-style recap (short paragraph).
- The recap MUST clearly state which group (x0 or x1, using actual group names from JSON) has higher or lower outcome probability/mean.
- It MUST explicitly describe the direction of disparity in plain words (e.g., "Females have a lower probability than Males of Y").
- It MUST explain whether the disparity is mainly driven by the direct, indirect, or spurious component.
- Do not only restate numeric values; interpret them substantively.

### LATEX structure
- Same content as TEXT, formatted as LaTeX.
- Use `\subsection*{...}` for subsection headings, with titles exactly:
  - `Overview of the Fairness Analysis`
  - `Decomposition of Effects`
  - `Stepwise Effects Across Ordered Levels of X` (only if included in TEXT)
- End with a recap section (e.g., `\subsection*{Recap}`) whose content corresponds to the TEXT recap.

## Subsection requirements

### 1) Overview of the Fairness Analysis
Write one short paragraph (3-6 sentences) covering:
- what the fairness analysis is about,
- which groups are compared (**$X=x0$ vs $X=x1$**, using JSON values when available) e.g. x0 female vs. x1 male,
- what outcome `Y` represents (from JSON),
- what was decomposed.

#### HARD RULE for continuous outcomes (strict)
If the JSON indicates **Y is continuous** AND **threshold/curve data is present**, you MUST include the following elements **in this exact order** within this subsection:

1) **Curve trend summary across thresholds**: describe whether total variation and each effect (total, direct, indirect, spurious) are constant or vary with threshold; whether they increase/decrease/non-monotonic as threshold increases; and which component changes most (if determinable).

2) **Threshold implication statement**: explain whether choosing a higher vs lower threshold would increase or decrease the observed disparity, and which effect(s) drive that change.

3) **Selected threshold statement**: explicitly state the threshold value selected by the user (exact value from JSON). If it is relatively high/low within the provided threshold grid, say so.

If curve/threshold data is missing, you MUST explicitly state that the trend cannot be assessed.

### 2) Decomposition of Effects
Provide a **bullet list** focusing on:
- total variation between the two groups,
- total effect,
- indirect effect:
  - interpret indirect effect as using baseline **x1** while the others use baseline **x0**,
  - if multiple mediators exist, describe mediator-specific decomposition using the mediator variable names,
- direct effect,
- spurious effect:
  - if multiple confounders exist, describe confounder-specific decomposition using the confounder variable names.

For each bullet, briefly interpret the effect in terms of fairness and potential discrimination. Always refer to mediators/confounders by **their variable names**; avoid generic terms.

#### X-specific and Z-specific extrema reporting (REQUIRED if present in JSON)
- **X-specific**:
  - Report the `X_value` with the **largest and smallest** direct, indirect, and total effect.
  - If not determinable: `X-specific results not provided or insufficient to determine extrema.`
  - If missing/empty: `X-specific results not provided.`
- **Z-specific (z_specific)**:
  - Same rules:
    - If not determinable: `Z-specific results not provided or insufficient to determine extrema.`
    - If missing/empty: `Z-specific results not provided.`

Do not provide generic summaries; tie statements to actual numeric results and named variables when present.

### 3) Stepwise Effects Across Ordered Levels of X (CONDITIONAL)
Include this subsection **only if**:
- `stepwise.enabled` is true **and**
- `effects_by_step` exists and is non-empty.

If included:
- State the 'X' order used (lowest -> highest).
- Summarize how effects change across adjacent steps.
- Mention any steps with small sample size (`n_rows`) or missing values.

If either condition is false or missing:
- Do **not** include this subsection,
- do **not** write its title,
- do **not** mention stepwise effects anywhere.

Keep language concise and data-scientist friendly. Do not invent numbers; if missing, state it was not provided.

## Numeric/notation rules (TEXT and LATEX)
- Clearly distinguish between **x0** and **x1** (use exact X values from JSON when available).
- Round all reported numeric results to **no more than 4 decimal digits**.

## Strict LaTeX rules (LATEX section only)
- LATEX must be **only** valid LaTeX.
- Must start with `\documentclass` and include `\begin{document}` ... `\end{document}`.
- Must include in the preamble exactly:
  - `\usepackage{geometry}`
  - `\geometry{margin=1in}`
  - `\usepackage{amsmath}`
- **All** math expressions must be inside `$...$`.
- Inequalities must use LaTeX symbols `\geq`, `\leq`, `>`, `<`.
  - Raw `>=`, `<=`, `==` are forbidden anywhere in LATEX.
  - Example: render ">=50K" as `"$\geq 50K$"`.

- Do NOT use \text{">50K"} or similar string-wrapped thresholds inside math.
- If the JSON contains threshold-style labels (e.g., ">50K", ">=50K"),
  convert them into proper mathematical notation such as:
  "$Y > 50K$" or "$Y \geq 50K$".
## Internal workflow (do not print)
1) Parse JSON: x0/x1 groups, Y type, mediators/confounders, threshold/curve data + selected threshold, stepwise flags + effects_by_step.
2) Decide which subsections apply.
3) Extract key numbers; round to <=3 decimals.
4) Write TEXT with the exact structure and required statements.
5) Convert the same content into a valid LaTeX document obeying all LaTeX rules.

Now produce the final answer with exactly:
- `TEXT:`
Text here...
- `LATEX:`
All the latex here..
---

## USER PROMPT (input-only)
{RESULTS_JSON}
\end{lstlisting}
\end{tcolorbox}
\end{document}